\definecolor{gray}{gray}{0.4}
\theoremstyle{plain}
\newtheorem{theorem}{\protect\theoremname}
\newtheorem*{theorem*}{\protect\theoremname}
\newtheorem*{prop*}{\protect\theoremname}
\newtheorem{definition}{\protect\definitionname}
\theoremstyle{definition}
\newtheorem{example}[definition]{\protect\examplename}
\theoremstyle{plain}
\newtheorem{lem}[definition]{\protect\lemmaname}
\newtheorem*{cor*}{\protect\corollaryname}
\newtheorem{prop}[definition]{\protect\propname}
\newtheorem*{question*}{\protect\questionname}
\newtheorem*{assumption*}{\protect\assumptionname}
\newenvironment{customthm}[1]
  {\innercustomthm}
  {\endinnercustomthm}
\newenvironment{customprop}[1]
  {\innercustomprop}
  {\endinnercustomprop}
\newenvironment{customdef}[1]
  {\innercustomdef}
  {\endinnercustomdef}
\newenvironment{customexample}[1]
  {\innercustomexample}
  {\endinnercustomdef}
\providecommand{\questionname}{Question}
\providecommand{\assumptionname}{Assumption}
\providecommand{\observationname}{Observation}
\providecommand{\corollaryname}{Corollary}
\providecommand{\definitionname}{Definition}
\providecommand{\lemmaname}{Lemma}
\providecommand{\theoremname}{Theorem}
\providecommand{\exercisename}{Exercise}
\providecommand{\examplename}{Example}
\providecommand{\remarkname}{Remark}
\providecommand{\factname}{Fact}
\providecommand{\propname}{Proposition}
\newcommand{\case}[4]{ \begin{cases}
    #1\quad &#2\\
    #3\quad &#4\end{cases}}
    \newcommand{\inProd}[2]{\langle #1,#2\rangle}
\newcommand{\OPT}{\mathtt{OPT}}
\newcommand{\inp}{\mathtt{in}}
\newcommand{\outp}{\mathtt{out}}
\newcommand{\TV}{\mathtt{TV}}
\newcommand{\vc}{\mathtt{VC}}
\newcommand{\R}{\mathbb{R}}
\renewcommand{\S}{\mathbb{S}}
\newcommand{\eps}{\epsilon}
\newcommand{\sign}{\mathsf{sign}}
\newcommand{\maj}{\mathsf{maj}}
\newcommand{\N}{\mathbb{N}}
\newcommand{\NN}{\mathcal{N}}
\newcommand{\Ex}{\mathop{\mathbb{E}}}
\renewcommand{\Pr}{\mathop{\mathbb{P}}}
\newcommand{\T}{\mathcal{T}}
\newcommand{\ol}[1]{\overline{#1}}
\newcommand{\dc}{\mathsf{dc}}
\newcommand{\sr}{\mathsf{sr}}
  \author{Bogdan Chornomaz \thanks{Departments of Mathematics, Technion.} \and  \and Shay Moran\footnote{Departments of Mathematics, Computer Science, and Data and Decision Sciences, Technion and Google Research.
 Robert J.\ Shillman Fellow; supported by ISF grant 1225/20, by BSF grant 2018385, by Israel PBC-VATAT, by the Technion Center for Machine Learning and Intelligent Systems (MLIS), and by the European Union (ERC, GENERALIZATION, 101039692). Views and opinions expressed are however those of the author(s) only and do not necessarily reflect those of the European Union or the European Research Council Executive Agency. Neither the European Union nor the granting authority can be held responsible for them. 
 } \and Tom Waknine \footnotemark[1]}
\title{On Reductions and Representations of Learning Problems in Euclidean Spaces}
\begin{document}
\maketitle

\begin{abstract}
Many practical prediction algorithms represent inputs in Euclidean space and replace the discrete 0/1 classification loss with a real-valued surrogate loss, effectively reducing classification tasks to stochastic optimization. In this paper, we investigate the expressivity of such reductions in terms of key resources, including dimension and the role of randomness.

We establish bounds on the minimum Euclidean dimension $D$ needed to reduce a concept class with VC dimension $d$ to a Stochastic Convex Optimization (SCO) problem in $\mathbb{R}^D$, formally addressing the intuitive interpretation of the VC dimension as the number of parameters needed to learn the class. To achieve this, we develop a generalization of the Borsuk-Ulam Theorem that combines the classical topological approach with convexity considerations. Perhaps surprisingly, we show that, in some cases, the number of parameters $D$ must be exponentially larger than the VC dimension $d$, even if the reduction is only slightly non-trivial. We also present natural classification tasks that can be represented in much smaller dimensions by leveraging randomness, as seen in techniques like random initialization. This result resolves an open question posed by Kamath, Montasser, and Srebro (COLT 2020). 

Our findings introduce new variants of \emph{dimension complexity} (also known as \emph{sign-rank}), a well-studied parameter in learning and complexity theory. Specifically, we define an approximate version of sign-rank and another variant that captures the minimum dimension required for a reduction to SCO. We also propose several open questions and directions for future research.
\end{abstract}

\section{Introduction}
Reduction is a fundamental concept in computer science, serving as a basic primitive in both computability and complexity theory. It plays a crucial role in defining complexity classes, such as P and NP, by enabling structured transformations of problems into one another. Through reductions, we can compare the difficulty of different problems, determine their relative complexity, and classify them accordingly. This framework has been instrumental in understanding computational limits and identifying problems that are efficiently solvable or intractable, shaping the study of algorithms and theoretical computer science.

Reductions are also a common and powerful tool in machine learning. Implicitly, reductions are used whenever one solves a problem by translating it into an already solved one. For instance, any linear representation of a classification task, such as those used in kernel machines, can be viewed as a reduction from the original learning task to linear classification (half-spaces). Another prominent example is the use of convex surrogate losses in place of the discrete 0/1 classification loss, which reduces the classification task to (stochastic) convex optimization. Similarly, representation learning, which focuses on learning features, and meta-learning, which focuses on learning which learning algorithm to use, can both be interpreted as forms of learning reductions. This perspective is evident, for example, in the formulation of meta-learning by \citet*{AliakbarpourB0S24}.

Historically, reductions in Valiant’s PAC learning model have been studied since the 1980s, primarily in the form of representations—mapping a concept class we wish to learn into a concept class we know how to learn~\citep*{PittW90}. In particular, geometric representations as half-spaces have been thoroughly explored in both learning theory and complexity theory, a partial list includes~\citep*{Ben-DavidES02,LinialS08,ForsterKLMSS01,ForsterSS01, Forster01,Alon17,KamathMS20,HatamiHM22}.

In this paper, we extend this line of research by studying reductions to arbitrary stochastic convex optimization (SCO) problems~\citep*{Shalev-ShwartzSSS09}. We relax the notion of dimension complexity by examining the minimum dimension $d$ for which a given class can be reduced to a $d$-dimensional SCO problem. Additionally, we explore reductions that exploit randomness and agnostic learners, demonstrating their advantages over naive representation-based reductions.

\paragraph{Organization.} We begin in \Cref{sec:results} by presenting our main results. In \Cref{sec:proofoverview}, we provide an overview of the central tools and ideas that underpin our proofs. Next, \Cref{sec:examples} presents natural examples of reductions that illustrate the tightness of our results. In \Cref{sec:prelim}, we cover preliminaries and basic definitions. The remaining sections are devoted to the detailed proofs.

\section{Main results}\label{sec:results}
In this section, we assume familiarity with basic concepts in learning theory such as concept classes, loss functions, and VC dimension. These definitions are provided in detail in Section~\ref{sec:prelim}.

We focus on reducing realizable-case classification problems to well-studied geometric learning tasks: (i) stochastic convex optimization (SCO) and (ii) linear classification. Our impossibility results extend even to non-uniform (distribution-dependent) learning, where sample complexity may vary depending on the input distribution. We examine how the dimension of the reduced problem relates to the VC dimension of the original classification task, and explore whether introducing randomness can help reduce this dimension.

Both the VC dimension and the Euclidean dimension are key parameters in classification and stochastic convex optimization (SCO), respectively: the VC dimension is the fundamental parameter characterizing PAC learnability, as highlighted by the fundamental theorem of PAC learning~\citep{shays14}. In SCO, the Euclidean dimension is closely tied to computational and space complexity; for example, the complexity of arithmetic operations—essential for computing gradients (at training time) and predictions (at test time)—scales with this dimension. It also influences model interpretability, as models with fewer parameters are generally easier for humans to understand.
Studying the relationship between these dimensions also addresses the intuitive interpretation of the VC dimension as the number of parameters needed to describe a concept class; we discuss this in further detail after Theorem~\ref{t:sco}.

A natural approach to relate the VC dimension and the Euclidean dimension would be through a sample complexity analysis, such as showing that if a binary concept class $C$ is reducible to an SCO problem in $\mathbb{R}^d$, then its sample complexity is at most $O(d)$. However, this approach cannot work because, in SCO, the Euclidean dimension does not determine sample complexity; instead, it depends on factors like Lipschitzness and the diameter of the parameter space. To overcome this, we employ a topological argument, specifically a variant of the Borsuk-Ulam Theorem.

\paragraph{Section Organization.}
In Section \ref{sec:reductions}, we formally define reductions between learning tasks. In Section~\ref{sec:sco}, we present our main result on reductions from classification to stochastic convex optimization, and in Section~\ref{sec:sign-rank}, we present our results on representing classification tasks using half-spaces, a well-studied and useful special case of reductions.

\subsection{Reductions}\label{sec:reductions}

\begin{figure}[hbt]
    \centering
	    \begin{tikzpicture}[
        box/.style={draw, rectangle, minimum width=4cm, minimum height=1.2cm, align=center}, 
        arrow/.style={->, thick},
        decorate,
        decoration={brace, amplitude=8pt} 
    ]


    \node (A_instances) [box] {Instances of $A$};
    \node (A_solutions) [box, below=1.5cm of A_instances] {Solutions of $A$};

    \node (B_instances) [box, right=7cm of A_instances] {Instances of $B$};
    \node (B_solutions) [box, below=1.5cm of B_instances] {Solutions of $B$};

    \draw[arrow] (A_instances.east) -- (B_instances.west) node[midway, above] {\small instances of $A$ $\to$ instances of $B$};
    \draw[arrow] (B_solutions.west) -- (A_solutions.east) node[midway, below] {\small solutions of $A$ $\leftarrow$ solutions of $B$};
    
    \draw[arrow] (B_instances.south) -- (B_solutions.north) node[midway, right] {\small Algorithm for $B$};

    \draw[decorate,decoration={brace,amplitude=8pt}] ([xshift=1.0cm, yshift=0.5cm]A_instances.east) -- ([xshift=-1.0cm, yshift=0.5cm]B_instances.west) node[midway, above=0.3cm] {\small $r_{\mathtt{input}}$};
    \draw[decorate,decoration={brace,amplitude=8pt}] ([xshift=-1.0cm, yshift=-0.5cm]B_solutions.west) -- ([xshift=1.0cm, yshift=-0.5cm]A_solutions.east) node[midway, below=0.3cm] {\small $r_{\mathtt{output}}$};

    \node at (A_instances.north) [above=0.5cm] {\small Problem $A$};
    \node at (B_instances.north) [above=0.5cm] {\small Problem $B$};

    \end{tikzpicture}
    \caption{A reduction from problem $A$, which we wish to solve, to problem $B$, which we can solve. The reduction maps instances of $A$ to instances of $B$, and solutions of $B$ back to solutions of $A$. The reduction is successful if, when combined with an algorithm for $B$, it solves problem~$A$.}
    \label{fig:reduction}
\end{figure}
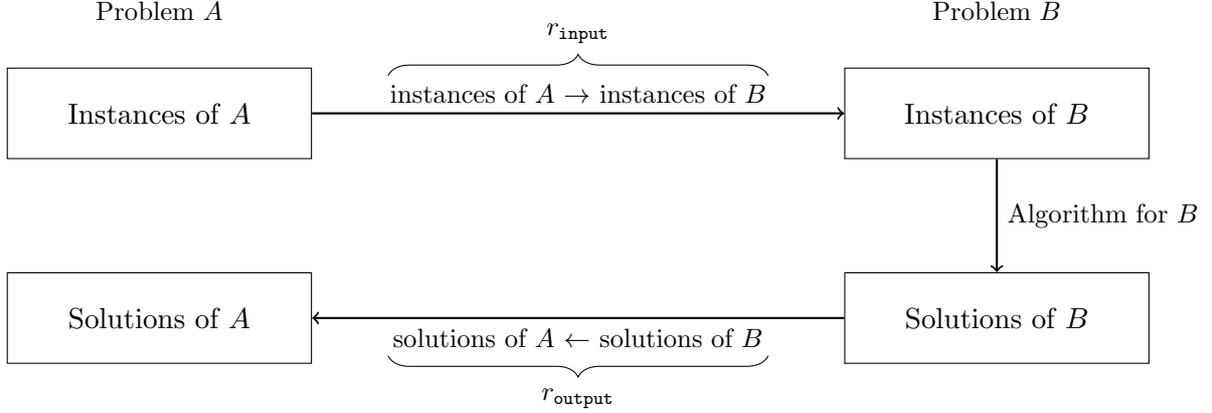

To define reductions, we first introduce a simple abstraction of learning problems that captures both binary classification and stochastic convex optimization, as well as other scenarios. We start by establishing the appropriate terminology, and after that, we will illustrate it in examples.

\begin{framed}
\begin{definition}[Learning task]\label{def:task}
A learning task $\mathcal{T}$ is a tuple $\mathcal{T}=(H, C, Z, P)$, where $H$ is a hypothesis class, $C \subseteq H$ is a concept class, $Z$ is the space of labeled examples, and $P$ is a family of distributions over $Z$. Additionally, each example $z \in Z$ has an associated loss function $\ell_z: H \to \mathbb{R}_{\geq 0}$. For a distribution $D \in P$, the loss of an hypothesis $h \in H$ is given by~$L_D(h) = \Ex_{z \sim D}[\ell_z(h)]$. 
\end{definition}
\end{framed}
For a learning task $\T$, our typical goal is to design a \emph{learning rule} $A$ that maps a sequence of examples $S\in Z^n$ to a hypothesis $A(S) \in H$. The goal is to design a learning rule that competes with the best concept in $C$, such that for every distribution~$D\in P$:
\[\Ex_{S \sim D^n}[L_D(A(S))] \leq \mathtt{OPT}_{C}(D) + o(1),\]
where $\mathtt{OPT}_{C}(D)= \inf_{c \in C} L_D(c)$, and the $o(1)$ term converges to 0 as $n \to \infty$.

A distribution $D$ over $Z$ is called \emph{realizable} if there exists $c \in C$ such that $L_D(c) = 0$; in particular, $\mathtt{OPT}_{C}(D) = 0$. We also say that $D$ is \emph{$\alpha$-realizable}, for $\alpha\geq 0$, if $\mathtt{OPT}_{C}(D) \leq \alpha$.
We say that the task~$\mathcal{T}$ is \emph{realizable} if $P$ is the family of all realizable distributions, and is \emph{agnostic} if it is the family of \emph{all} distributions over $Z$. Unless specified otherwise, we assume the agnostic setting.
As a remark, let us note that the complexity of designing the learning algorithm $A$ is decreasing in $H$ (more options for an output), increasing in $C$ (competing against a bigger class), and increasing in~$P$ (more potential inputs). 

\begin{example}[PAC-learning]\label{ex:pac} In PAC learning model, $C \subseteq \{\pm 1\}^X$ is a class of binary classifiers over a domain $X$ that we want to learn. The labeled examples are $Z = X \times \{\pm 1\}$, and $\ell_{(x,y)}(h) = 1[h(x) \neq y]$ is the 0/1 classification loss. $P$ is the set of realizable distributions in the \emph{realizable} case and the set of all distributions in the \emph{agnostic} case. Finally, $H$ is usually taken to be all functions $H=\{\pm 1\}^X$ restricting it to $H = C$ corresponds to \emph{proper} PAC learning. 
\end{example}

\begin{example}[PAC-learning for partial concept classes]\label{ex:pac-partial} This model generalizes PAC-learning by letting~$C$ be a class of partial binary functions, that is, $C \subseteq \{-1, +1, *\}^X$, where $*$ is treated as ``undefined''. For $c\in C$, the support $\sup(c)$ is the subset of $X$ on which $c$ is defined. Note, however, that the labeled examples are still $Z=X\times \{\pm1\}$, that is, we do not allow examples to be undefined. The loss function is defined similarly, that is, for a partial function $h$ on $X$, $\ell_{(x,y)}(h) = 1[h(x) \neq y]$. In particular, for an example $(x,y)$, the case $h(x) = *$ is treated as a mistake.

The main difference with the classical PAC learning comes from the class of distributions~$P$. Note that now a realizable distribution $D$ should be supported by some partial concept $c\in C$ in the sense that $\Pr_{x,y\sim D}[x\in \sup(c)] = 1$ (assuming $X$ is finite; in the infinite case, there might be additional subtleties). For example, if every concept in $C$ is supported on only half of the domain, then the same is true for any realizable distribution. This contrasts with canonical PAC, where ``typical'' distributions are supported on the whole $X$.
Just as before, $H$ is typically the set of all functions on~$X$, and the agnostic case is defined similarly. 

One particularly useful (and classical) example of PAC learning with partial concept classes is the class of linear classifiers with margins, which we consider in Theorems~\ref{t-prob-sr-margin} and~\ref{t-det-sr-margin}.
\end{example}

\begin{example}[Stochastic convex optimization]\label{ex:sco}
In this setup, $H = C$ is a convex set, which we will call~$W$; $Z$ is an abstract set such that every $z\in Z$ is equipped with a convex function $\ell_z: W \to \mathbb{R}_{\geq 0}$, and $P$ is the set of all distributions over $Z$ (i.e., agnostic setting). All in all, $Z$ might be viewed simply as an index set of a set of convex loss functions, and the goal of the learning algorithm is, having a sample of such functions, to find a point in $W$ that would minimize the expected loss of the given distribution over them.

We will be interested in a \emph{dimension} of an SCO task. For this, 
we will assume that $W\subseteq \R^d$ and refer to the learning task as to \emph{stochastic convex optimization in $\R^d$}.
\end{example}

\begin{example}[General setting of learning]
The general setting of learning, introduced by~\cite*{Vapnik1998, Vapnik1999overview}, can be modeled as a learning task. In this setup, $H = C$ and $Z$ are arbitrary sets, with $P$ being the set of all distributions over $Z$. In general, there are no assumptions on the loss functions $\ell_z: H \to \mathbb{R}_{\geq 0}$.

In many cases within this framework, we assume the existence of a set of examples \( X \) and a set of labels \( Y \), such that \( Z = X \times Y \), and the hypothesis space \( H \) to be a subset of functions from \( X \) to \( Y \) (i.e., \( H \subseteq Y^X \)). In this case, the loss functions take the form \( \ell_{(x, y)}(h) = L\big(h(x), y\big) \), where \( L: Y^2 \to \mathbb{R}_{\geq 0} \) is a fixed loss function that compares the predicted label \( h(x) \) with the true label \( y \). 
\end{example}

For a learning task $\mathcal{T}=(H,C,Z,P)$ and $\alpha \geq 0$, we say that $h\in H$ is \emph{$\alpha$-optimal} for $D\in P$ if $L_D(h) \leq \OPT_C(D) + \alpha$. Note that for a PAC-learning task, $0$-optimality that is witnessed by $c\in C$ is equivalent to realizability.

\begin{framed}
\begin{definition}[Reductions]\label{def:red}
Let $\mathcal{T}_1=(H_1,C_1,Z_1,P_1)$ and $\mathcal{T}_2=(H_2,C_2,Z_2,P_2)$ be two learning tasks, and let $\alpha > 0$ and $\beta\geq 0$. An $(\alpha,\beta)$-reduction $r$ from $\mathcal{T}_1$ to $\mathcal{T}_2$
consists of two maps $r_\inp:Z_1\to Z_2$ and
$r_\outp:H_2\to H_1$ such that the following holds.
\begin{enumerate}
    \item For every distribution $D_1$ in $P_1$, the distribution $r_\inp(D_1)$ is in $P_2$.
    Here $r_\inp(D_1)$ is the push-forward measure induced by sampling $z\sim D_1$ and mapping it to $r_\inp(z)$.
    \item For every $D_1\in P_1$ and $h_2\in H_2$, if $h_2$ is $\alpha$-optimal for $r_\inp(D_1)$ then $r_\outp(h_2)$ is $\beta$-optimal for~$D_1$.
\end{enumerate}

A reduction $r$ is called \underline{exact} if for every distribution $D_1$, realizable by $\mathcal{T}_1$, the push-forward distribution $r_\inp(D_1)$ is realizable by $\mathcal{T}_2$.
\end{definition}
\end{framed}
In the above definition, we do not allow $\alpha=0$ as, potentially, because $\OPT_C(\cdot)$ is defined using infimum, a learning task might not contain $0$-optimal solutions, in which case a notion of $(0, \beta)$-reduction would trivialize; however, allowing $\beta=0$ does not lead to such situations. Note that an $(\alpha,\beta)$-reduction is an $(\alpha',\beta')$ reduction for any $\alpha'\leq \alpha$ and $\beta'\geq \beta$. \Cref{fig:reduction2} below illustrates how $(\alpha, \beta)$-reduction aligns to the approach outlined in \Cref{fig:reduction}.

\begin{figure}[hbt]
    \centering
	    \begin{tikzpicture}[
        box/.style={draw, rectangle, minimum width=4cm, minimum height=1.2cm, align=center}, 
        arrow/.style={->, thick},
        decorate,
        decoration={brace, amplitude=8pt} 
    ]


    \node (P1) at (0,0)  {$P_1$};
    \node (H1) at (0, -2) {$H_1$};

    \node (P2) at (4, 0) {$P_2$};
    \node (H2) at (4, -2) {$H_2$};

    \draw[arrow] (P1.east) -- (P2.west) node[midway, above] {$r_\inp$};
    \draw[arrow] (H2.west) -- (H1.east) node[midway, below] {$r_\outp$};
    
    \draw[dashed] (P2)--(H2) node[midway, right] {\small $\alpha$-optimal};
    \draw[dashed] (P1)--(H1) node[midway, left] {\small $\beta$-optimal};
    
    \draw[double, dashed, ->] (3.5, -1)--(0.5, -1);

%
    \end{tikzpicture}
    \caption{An $(\alpha,\beta)$-reduction.}
    \label{fig:reduction2}
\end{figure}
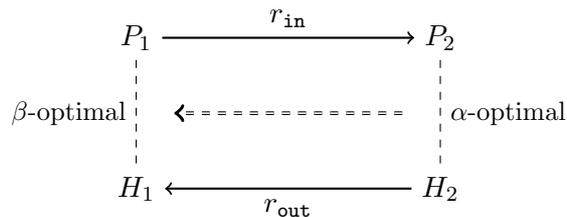

\subsection{Reductions to stochastic convex optimization}\label{sec:sco}

Reductions from classification problems to stochastic convex optimization (SCO) are a common algorithmic tool, exemplified by the use of surrogate losses (e.g., hinge loss), regularization techniques, kernel methods, and other methods that rely on representing data in Euclidean spaces. In this section, we study the minimum dimension in which a concept class $H$ can be reduced to an SCO task. Our main theorem provides a lower bound on the minimum Euclidean dimension required for such reductions, expressed in terms of the VC and dual VC dimension of the problem.

\smallskip\noindent
\begin{minipage}{\textwidth}
\begin{framed}
\begin{theorem}[Binary classification vs. stochastic convex optimization]\label{t:sco}
Let $C\subseteq\{\pm 1\}^X$ be a binary concept class.
If for some $\beta<1/2$ and $\alpha > 0$ there exists an $(\alpha,\beta)$-reduction from the task of learning $C$ in the realizable case to a stochastic convex optimization task in $\mathbb{R}^d$, with loss functions $\{\ell_z\}_{z\in Z}$ satisfying $\ell_z(w)<\infty$ for all $z\in Z$, $w\in W$.	
Then
\[d\geq \max\{\vc(C), \vc^\star(C)-1\}.\]  
\end{theorem}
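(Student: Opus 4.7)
The plan is to establish the two inequalities $d \geq \vc(C)$ and $d \geq \vc^\star(C)-1$ separately, in each case converting a combinatorial shattering witness into a topological obstruction in $\R^d$ that is ruled out by a convex variant of the Borsuk--Ulam theorem (presumably the paper's main topological tool).

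I begin with $d \geq \vc(C)$. Fix a shattered set $\{x_1, \dots, x_n\} \subseteq X$ of size $n := \vc(C)$. For each $y \in \{\pm 1\}^n$ the uniform distribution $D_y$ on $\{(x_i, y_i)\}_i$ is realizable, so by the $(\alpha,\beta)$-reduction every $\alpha$-minimizer of the convex loss $L_y(w) := \Ex_{z \sim r_\inp(D_y)}[\ell_z(w)]$ yields a hypothesis $r_\outp(w)$ that agrees with $y$ on strictly more than $n/2$ of the shattered points. The set $B_y := \{w \in W : L_y(w) \leq \inf L_y + \alpha\}$ is therefore a nonempty convex subset of $\R^d$, and $B_y \cap B_{-y} = \emptyset$ for every $y$. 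The family $\{L_y\}$ has helpful additive structure: writing $a_i(w) := \ell_{r_\inp(x_i,+1)}(w)$, $b_i(w) := \ell_{r_\inp(x_i,-1)}(w)$, and $L_t(w) := \tfrac{1}{2n} \sum_i \bigl((1+t_i) a_i(w) + (1-t_i) b_i(w)\bigr)$ for $t \in [-1,1]^n$, each $L_t$ is convex in $w$ and affine in $t$, agrees with $L_y$ at the vertices $t = y$, and interpolates the $L_y$'s over the cube. This allows a passage from the $2^n$ discrete losses to a continuous family.

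The topological step then proceeds as follows. Let $w(t) := \arg\min_{w \in W}\bigl(L_t(w) + \varepsilon \|w\|^2\bigr)$ be the unique minimizer of the strongly-convex regularization; it depends continuously on $t \in [-1,1]^n$, and for $\varepsilon$ sufficiently small it lies in $B_y$ at every vertex $t = y$. The odd map $\Psi(t) := w(t) - w(-t)$ on $\partial [-1,1]^n \cong S^{n-1}$ is continuous and nonzero at each vertex (since $B_y \cap B_{-y} = \emptyset$). If $\Psi$ were nonzero throughout the boundary, normalizing would produce a continuous odd map $S^{n-1} \to S^{d-1}$, which the classical Borsuk--Ulam theorem forbids whenever $n > d$. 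The possibility that $\Psi$ vanishes at non-vertex boundary points is exactly what the paper's convex variant of Borsuk--Ulam is designed to handle: it leverages the convexity of each $L_t$, together with the non-intersection of antipodal $\alpha$-sublevel sets, to still rule out such a map. This forces $d \geq n = \vc(C)$.

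For $d \geq \vc^\star(C)-1$: let $c_1, \dots, c_m \in C$ be a dual-shattering family with witnesses $\{x_y : y \in \{\pm 1\}^m\}$ satisfying $c_i(x_y) = y_i$, where $m := \vc^\star(C)$. For each $i$, the uniform distribution $D_i$ on $\{(x_y, y_i) : y \in \{\pm 1\}^m\}$ is realizable by $c_i$ and yields a nonempty convex $\tilde B_i \subseteq W$ of $\alpha$-minimizers of $L_{r_\inp(D_i)}$; every $w \in \tilde B_i$ induces a hypothesis $h$ with $\Ex_y[h(x_y) \cdot y_i] \geq 1-2\beta > 0$. Only $m$ (rather than $2^m$) realizable distributions are available here, so the antipodal sphere in play is one dimension smaller; interpolating the $m$ losses over the probability simplex $\Delta^{m-1}$ and repeating the regularized-minimizer construction gives a continuous map to $\R^d$ whose relevant topological obstruction lives on $S^{m-2}$, yielding $d \geq m - 1$ by the same convex Borsuk--Ulam argument. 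The main obstacle throughout is the convex Borsuk--Ulam theorem itself, which is what allows the topological argument to survive the non-vertex zeros of the auxiliary odd map; I expect that formulating and proving this generalization will constitute the bulk of the technical work.
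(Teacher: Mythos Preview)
Your high-level plan---build an antipodal sphere of loss functions and apply a convex Borsuk--Ulam variant---matches the paper's strategy. But the sphere you build is the wrong one, and this is a genuine gap, not a detail to be patched by the convex Borsuk--Ulam theorem.

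For the VC part: at a non-vertex point $t\in\partial[-1,1]^n$ your interpolated loss $L_t$ is the loss under the distribution that puts weight $\tfrac{1+t_i}{2n}$ on $(x_i,+1)$ and $\tfrac{1-t_i}{2n}$ on $(x_i,-1)$. Whenever some $t_i\in(-1,1)$ this distribution is \emph{not} realizable by $C$ (the label at $x_i$ is nondeterministic), so the reduction hypothesis tells you nothing about $r_\outp$ of an $\alpha$-minimizer of $L_t$, and you have no reason to believe $B_t\cap B_{-t}=\emptyset$ (or that $\Psi(t)\neq 0$). You write that the convex Borsuk--Ulam ``is designed to handle'' these non-vertex zeros, but look at the hypotheses of Theorems~\ref{t:Borsuk-Ulam} and~\ref{t:BU-open}: they require the antipodal fibers $G_x$ and $G_{-x}$ (or their convex hulls) to be disjoint for \emph{every} $x\in\S^d$, not just at finitely many vertices. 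The convexity is used to pass from the set-valued relation to a continuous single-valued map; it does not manufacture antipodal disjointness where none exists. Indeed, odd continuous maps $S^{n-1}\to\R^d$ that are nonzero at the $2^n$ cube vertices but vanish elsewhere are easy to produce, so your $\Psi$ alone carries no obstruction.

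The fix is to use a sphere that lives entirely inside the realizable distributions. For the VC case the paper (via Proposition~\ref{prop-spheres}) effectively uses the $\ell_1$-sphere: for $t\in\R^n$ with $\|t\|_1=1$, let $D_t$ put weight $|t_i|$ on $(x_i,\sign t_i)$. Every such $D_t$ is realizable (the labels are deterministic and $\{x_i\}$ is shattered), $D_{-t}=-D_t$, and $t\mapsto D_t$ is a homeomorphism onto an $(n-1)$-sphere in the space of realizable distributions. Now the reduction applies at every point of the sphere, giving $G_t\cap G_{-t}=\emptyset$ for all $t$, and the convex Borsuk--Ulam yields $d\geq n$. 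Once you have the right sphere, your regularized-minimizer selection $w(t)=\arg\min(L_t+\varepsilon\|\cdot\|^2)$ is essentially an alternative to the partition-of-unity selection in the paper's proof of Theorem~\ref{t:BU-open}; either works.

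Your dual VC argument has the same defect and an additional one: convex combinations $\sum p_iD_i$ over the simplex $\Delta^{m-1}$ are again not realizable (both labels receive mass at most $x_y$), and $\partial\Delta^{m-1}$ carries no antipodal involution, so it is unclear what ``obstruction on $S^{m-2}$'' you mean. The construction of an $(m-2)$-sphere of realizable distributions with the correct antipodality in the dual-shattered setting is more delicate and is exactly what Proposition~\ref{prop-spheres} (imported from \cite{chase2024local}) supplies; you should not expect to get it from a naive simplex interpolation.
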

\vspace{-4mm}
\end{framed}
\end{minipage}\medskip

Perhaps surprisingly, Theorem~\ref{t:sco} implies that certain VC classes require an exponential increase in dimension for learning by reduction to SCO. See Example~\ref{ex:proj} for a simple class that exhibits this property. We discuss more classes that witness boundary cases of \Cref{t:sco} in \Cref{sec:t1-tightness}. 

It is also worth noting that the conclusion in Theorem~\ref{t:sco} extends to certain cases where the loss functions are $\infty$-valued. Below, we provide a natural example of such an $\infty$-valued SCO task (\Cref{ex:LPint}) and present an adaptation of the theorem for these reductions (\Cref{t:sco2}).


From a technical perspective, relating the VC dimension to the Euclidean dimension in SCO is subtle, as they capture fundamentally different aspects of learning complexity: in PAC learning, the VC dimension directly governs sample complexity, whereas in SCO, sample complexity is decoupled from the Euclidean dimension and instead depends on factors such as the Lipschitz constant and the diameter of the parameter space.\footnote{For example, there are one-dimensional SCO problems with an unbounded diameter or Lipschitz constant that are not learnable and infinite-dimensional SCO problems with a bounded diameter and Lipschitz constant that are learnable. (See, e.g.~\cite*{cutkosky2024optml}; Theorems 3.2 and 16.7)} Instead, the Euclidean dimension in SCO is closely related to other resources, such as space complexity, the complexity of arithmetic operations, and test-time complexity. 



\paragraph{VC dimension vs.\ number of parameters.}
There is a natural, intuitive interpretation of the VC dimension as the number of parameters needed to encode concepts in a class, and this intuition is supported by natural classes such as half-spaces and axis-aligned boxes. However, it is known that if one allows general parametrization schemes, this interpretation does not hold. A standard example illustrating this is the class $\{x \mapsto \sign(\sin(tx)): t \in \mathbb{R}\}$, which has an unbounded VC dimension despite being parameterized by a single parameter $t$. (See also \Cref{ex:non-convex-reduction}.) In contrast, Theorem~\ref{t:sco} shows that when the parametrization is restricted to be convex, the number of parameters required must be at least as large as the VC dimension, and in some cases, it must be exponentially larger, see \Cref{ex:proj} in \Cref{sec:t1-tightness} for details. This demonstrates that the VC dimension imposes a meaningful lower bound when parametrizations are constrained to a convex setting. 

Moreover, the open question posed in the previous paragraph—whether the minimal dimension $d$ for which an SCO-reduction exists can be \emph{upper bounded} in terms of the VC dimension—is also of interest here. If such an upper bound exists, it would provide a formal manifestation of the intuition that the VC dimension corresponds to the number of parameters needed to encode and learn the class.

\paragraph{Half-spaces.}
The following examples (Examples~\ref{ex:hingeint} and \ref{ex:LPint}) concern classical reductions from $d$-dimensional linear classification (i.e.\ learning half-spaces) to stochastic convex optimization in $d$-dimensions. 

A half-space is a concept $c_{w, b}: \mathbb{R}^d \to \{\pm1\}$, parametrized by a pair $w \in \mathbb{R}^d$ and $b\in \R$ and defined as $c_{w, b}(x) = \sign(\langle w, x \rangle + b)$. A half-space is called \emph{homogenous} if $b=0$ and \emph{affine} in the general case; if not specified otherwise, we consider half-spaces to be homogenous. Learning half-spaces is a fundamental task in learning theory, and Support Vector Machines (SVMs) are a classical family of algorithms for this problem. SVMs aim to find a consistent half-space whose supporting hyperplane maximizes the margin from the data points. Formally, for each $x \in \mathbb{R}^d$ and $y \in \{\pm 1\}$, the \emph{regularized hinge loss} is defined by
\[
\ell_{x,y}(w) = \lambda\cdot ||w||^2 + \max(0, 1 - y\langle w, x \rangle),
\]
where $\lambda \geq 0$ is a regularization parameter. 
In some cases, the regularized hinge loss is alternatively defined as
\[
\ell_{x,y}(w) = ||w||^2 + c\cdot \max(0, 1 - y\langle w, x \rangle),
\]
where $c \geq 0$ is a regularization parameter. 
Both definitions are equivalent up to a multiplicative constant, by setting $\lambda = 1/c$. Thus, SVMs can be viewed as a reduction from learning half-spaces to stochastic convex optimization, where the loss function is the regularized hinge loss.

In practice, SVMs are used with regularization parameters in $(0, \infty)$. A key advantage that makes SVMs practical is that the regularized hinge loss can be efficiently minimized even on non-separable data (i.e., data from non-realizable distributions). 
Theoretically, SVMs provably learn separable distributions with a margin between positive and negative regions. The next examples show that, in the limit as $\lambda \to \infty$ and $c \to 0$, SVMs reduce $d$-dimensional half-spaces to SCO, even without margin assumptions. These limiting cases are often called \emph{hard SVM}. Interestingly, taking $\lambda$ to infinity and $c$ to zero results in different reductions.


\begin{example}[SVM with unregularized hinge loss]\label{ex:hingeint}
    Let $W=\mathbb{R}^{d+1}$, $Z=\mathbb{R}^d\times\{\pm 1\}$. Define the unregularized hinge loss by
    \[
    \ell_{x,y}(w,a)=\max\big(0,1-y(\langle w,x\rangle+a)\big).
    \]
    This defines the unregularized hinge loss SCO task, which is an SCO problem with continuous loss functions but over a non-compact domain $W$.
    It corresponds to taking $\lambda\to 0$ in the first formula for the regularized hinge loss.
    Define the reduction  $r$ from the task of learning homogenous halfspaces in~$\R^d$ to the above SCO by
    \begin{align*}
    &r_\inp (x,y)=(x,y),
    \\&r_\outp (w, a)=c_{w, a},
\end{align*}
where $c_{w, a}(x)=  \sign(\langle w, x \rangle + a)$.
In Section~\ref{sec:reductions-to-SVM} we show that for all $\alpha>0$, $r$ is an $(\alpha,\alpha)$-reduction.
\end{example}

\begin{example}[Hard SVM]\label{ex:LPint}
Let $W=\mathbb{R}^{d}$, $Z=\mathbb{R}^d\times\{\pm 1\}$ and define the linear programming loss function by
    \[
\ell_{x,y}(w)=\case{||w||^2}{\sign\big(\langle x,w\rangle\big)=y,}{\infty}{\text{otherwise.}}
\]
It corresponds to taking $c\to\infty$ in the second formula for the regularized hinge loss.
Note that while in general $\ell_{x,y}$  is neither continuous nor finite, it is convex. In fact, it can be considered a limit case of the regularized hinge loss function
\[
\ell_{x,y}(w)=||w||^2+c\cdot\max(0,1-y\langle w,x\rangle)
\]
where $c=\infty$.  
This example illustrates how $\infty$ values are useful when one wishes to express hard constraints on the parameter space within the objective function.
This allows to express linear programming as an SCO task. The reduction  $r$ from the learning homogenous halfspaces to this SCO is then defined in the same way as in \Cref{ex:hingeint}.
In Section~\ref{sec:reductions-to-SVM} we show that, for all $\alpha > 0$, this is an $(\alpha,0)$-reduction.
\end{example}

Theorem~\ref{t:sco} does not apply to the last example because the loss functions in the SCO problem are $\infty$-valued. Since, as illustrated by the example above, $\infty$-valued SCO tasks are natural, we prove the next theorem that applies to \emph{exact} reductions to such tasks.

\begin{framed}
\begin{theorem}[A variant of Theorem~\ref{t:sco} for $\infty$-valued SCO]\label{t:sco2}
Let $C\subseteq\{\pm 1\}^X$ be a binary concept class.
Assume that there exists an \underline{exact} reduction from the task of learning $C$ in the realizable case with randomized hypotheses to a stochastic convex optimization task in $\mathbb{R}^d$ with $\infty$-valued loss functions $\ell_z:W\to \mathbb{R}_{\geq0}\cup\{\infty\}$.
Then, 
\[d\geq \max\{\vc(C)-1, \vc^*(C)-1\}.\]  
\end{theorem}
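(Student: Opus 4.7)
The plan is to adapt the two-part strategy of Theorem~\ref{t:sco} to the exact $\infty$-valued setting. The crucial advantage of exactness together with the admissibility of $\infty$-values is the clean convex-geometric structure it provides: for each $z \in Z_2$ in the image of $r_\inp$, the zero-level set $K_z := \ell_z^{-1}(\{0\}) \subseteq W$ is convex (as a sublevel set of the convex function $\ell_z$), and by exactness the intersection $\bigcap_{z \in \supp(r_\inp(D_1))} K_z$ is nonempty for every finitely-supported realizable distribution $D_1$ of $\mathcal{T}_1$.

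For the bound $d \geq \vc(C) - 1$, I would shatter $\{x_1, \ldots, x_n\}$ with $n = \vc(C)$ and define $K_i^{b} := K_{r_\inp(x_i, b)}$ for $i \in [n]$ and $b \in \{\pm 1\}$. Exactness applied to the realizable point-mass distributions $\delta_{(x_i, b)}$ makes each $K_i^b$ nonempty. These pairs are disjoint: any $w \in K_i^{+1} \cap K_i^{-1}$ would, through $r_\outp$, produce a randomized hypothesis simultaneously satisfying $\Pr[r_\outp(w)(x_i) = +1] > 1/2$ and $\Pr[r_\outp(w)(x_i) = -1] > 1/2$, impossible for $\beta < 1/2$. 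For each labeling $y \in \{\pm 1\}^n$ the uniform distribution on $\{(x_i, y_i)\}$ is realizable, so exactness yields a point $w_y \in \bigcap_i K_i^{y_i}$. The convex hulls of $\{w_y : y_i = +1\}$ and $\{w_y : y_i = -1\}$ then lie in the disjoint convex sets $K_i^{+1}$ and $K_i^{-1}$, so are strictly separated by an affine functional $f_i$. Collecting these into an affine map $\Psi := (f_1, \ldots, f_n): \R^d \to \R^n$ gives $\sign f_i(w_y) = y_i$ for every $i$ and $y$, so the image $\Psi(\R^d)$ is an affine subspace of dimension at most $d$ meeting all $2^n$ open sign orthants of $\R^n$. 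A standard hyperplane-arrangement count (a $k$-dimensional affine subspace of $\R^n$ meets at most $\sum_{i=0}^{k} \binom{n}{i}$ open sign orthants) then forces $d \geq n$, and in particular $d \geq \vc(C) - 1$.

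For the bound $d \geq \vc^*(C) - 1$, I would dually shatter $c_1, \ldots, c_m$ by witnesses $\{x_y\}_{y \in \{\pm 1\}^m}$ with $m = \vc^*(C)$, and reuse the Borsuk-Ulam-style topological machinery from Theorem~\ref{t:sco}. The strategy is to continuously parametrize a family of realizable distributions $D_\lambda$ by $\lambda$ on an antipodally-symmetric parameter space (typically $S^{m-1}$); use exactness together with the convexity and closedness of the sets $K_z$ to extract a continuous, antipodally-equivariant selection $\lambda \mapsto w_\lambda \in W$ with $L_{r_\inp(D_\lambda)}(w_\lambda) = 0$; and then invoke the generalized Borsuk-Ulam theorem developed elsewhere in the paper (combining topology with convexity) to conclude $d \geq m - 1$. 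The possibility of $\infty$-valued losses does not obstruct the argument, since the zero-level sets $K_z$ remain closed, convex, and nonempty on the relevant inputs, which is what the generalized Borsuk-Ulam statement requires.

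The main obstacle will be the $\vc^*$ step: arranging a continuous, antipodally-equivariant selection $\lambda \mapsto w_\lambda$ across the full parameter sphere when $\ell_z$ may be $\infty$-valued, and checking that the hypotheses of the generalized Borsuk-Ulam theorem are satisfied in this setting (e.g., via Michael's selection theorem, or by a direct construction using convex combinations of the extreme-case witnesses $w_y$). The $\vc(C) - 1$ step, by contrast, is a purely convex-geometric deduction from the disjointness of $K_i^{+1}, K_i^{-1}$ and the existence of the $w_y$'s, and should go through cleanly.
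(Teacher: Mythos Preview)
Your $\vc(C)$ argument is correct and self-contained; in fact your orthant count yields $d\geq\vc(C)$, one better than the stated $\vc(C)-1$. The genuine gap is in the $\vc^\star$ half. The Borsuk--Ulam machinery of Theorem~\ref{t:sco} hinges on the openness of the fibers $G^w$, which in turn rests on the continuity of $u\mapsto \OPT_W\bigl(r_\inp(\varphi(u))\bigr)$ established in Lemma~\ref{l:Inf-Lin-Contin}; that lemma needs the loss vectors $\alpha_w=(\ell_z(w))_z$ to be finite. With $\infty$-valued losses this continuity fails, so neither Theorem~\ref{t:BU-open} nor a Michael-selection argument applies without substantial extra work. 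This is precisely the reason the paper isolates Theorem~\ref{t:sco2} from Theorem~\ref{t:sco} rather than simply reusing the topological proof.

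The paper's route is entirely different and much shorter: it invokes Theorem~\ref{t:halfspaces-complete} (half-spaces are complete for \emph{exact} SCO reductions), which uses only the convex zero-sets $\ell_z^{-1}(0)$ and a minimax/separation argument, and is therefore indifferent to whether $\ell_z$ takes the value $\infty$ elsewhere. This yields $\text{sign-rank}(C)\leq d+1$, and then the known bounds $\text{sign-rank}\geq\vc$ and $\text{sign-rank}\geq\vc^\star$ (Alon) give both halves at once. Your own $\vc$ argument is essentially this proof specialized to a shattered set: you separate the disjoint convex sets $K_i^{+1},K_i^{-1}$ by affine functionals. If instead you carry out the same separation for \emph{every} $x\in X$ (after using minimax over each $c\in C$ to pin down a single $w_c$, as in the proof of Theorem~\ref{t:halfspaces-complete}), you get a full half-space representation of $C$ in $\R^{d+1}$, and the $\vc^\star$ bound drops out from the sign-rank literature with no topology needed.
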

\vspace{-2mm}
\end{framed}

\subsection{Geometric representations}\label{sec:sign-rank}

In this section, we focus on a special type of reductions called \emph{representations}. This notion is quite natural and, within the learning theory, variants of this definition were studied, for example, in \citet*{PittW90,KamathMS20,AliakbarpourB0S24}.

\begin{definition}[Representations]\label{def-repr}
Let $C_1\subseteq \{\pm 1\}^{X_1}$ and $C_2\subseteq \{\pm 1\}^{X_2}$ be two concept classes and let $\alpha\geq 0$. For $\alpha\geq 0$, an \emph{$\alpha$-representation} of $C_1$ by $C_2$ is a map $r:X_1\to X_2$ such that for every distribution $D_1$, realizable by $C_1$, $r(D_1)$ is $\alpha$-realizable by $C_2$.\footnote{Here, with an abuse of notation, we extend $r$ from $X_1$ to $X_1 \times \{\pm1\}$ by letting $r(x, y) = (r(x), y)$, and further extend it to distributions $D_1$ over $X_1 \times \{\pm1\}$ by pushing-forward: sampling from $r(D_1)$ amounts to sampling $(x,y)\sim D_1$ and outputting $r(x,y)$.} The representation $r$ is called \emph{exact} if it is a $0$-representation, that is, it maps realizable distributions into realizable.
\end{definition}

In particular, representations are indeed reductions.
\begin{prop}[Representations are reductions]\label{prop-repr-is-red}
Let $r\colon X_1 \rightarrow X_2$ be an $\alpha$-representation of $C_1$ by $C_2$. Then for $r_\inp(x,y) = (r(x), y)$ and $r_\outp(h)(x) = h(r(x))$, the pair $(r_\inp, r_\outp)$ is a $(\gamma, \gamma + \alpha)$-reduction of the task of PAC learning $C_1$ in a realizable case to PAC-learning $C_2$,
for any $\gamma\geq 0$.
\end{prop}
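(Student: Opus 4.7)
The plan is essentially to unwind the definitions of Condition~1 and Condition~2 of \Cref{def:red}, using the key identity that the $0/1$ loss pulls back cleanly under the representation map: for every $h_2\in H_2$ and every labeled example $(x,y)\in X_1\times\{\pm1\}$,
\[
\ell_{(x,y)}\bigl(r_\outp(h_2)\bigr) \;=\; \mathbf{1}[h_2(r(x))\neq y] \;=\; \ell_{(r(x),y)}(h_2) \;=\; \ell_{r_\inp(x,y)}(h_2).
\]
Taking expectations over $(x,y)\sim D_1$ and using the definition of the push-forward measure $r_\inp(D_1)$ immediately gives the ``loss-preservation'' identity $L_{D_1}(r_\outp(h_2)) = L_{r_\inp(D_1)}(h_2)$, which will drive the whole argument.

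First I would verify Condition~1 of \Cref{def:red}: for any realizable distribution $D_1\in P_1$, we need $r_\inp(D_1)\in P_2$, where $P_2$ is the family of distributions of the target PAC task on $C_2$. Since $r$ is an $\alpha$-representation, $r(D_1) = r_\inp(D_1)$ is by definition $\alpha$-realizable by $C_2$, hence belongs to $P_2$ (which in the PAC-learning setup of $C_2$ contains all relevant $\alpha$-realizable distributions; in particular agnostic $P_2$ trivially contains it).

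Next I would verify Condition~2. Fix $D_1\in P_1$ and $h_2\in H_2$ with $h_2$ being $\gamma$-optimal for $r_\inp(D_1)$, i.e.
\[
L_{r_\inp(D_1)}(h_2) \;\leq\; \OPT_{C_2}\bigl(r_\inp(D_1)\bigr) + \gamma.
\]
Because $r_\inp(D_1)$ is $\alpha$-realizable by $C_2$, we have $\OPT_{C_2}(r_\inp(D_1))\leq \alpha$, so $L_{r_\inp(D_1)}(h_2)\leq \gamma+\alpha$. Combining this with the loss-preservation identity gives
\[
L_{D_1}\bigl(r_\outp(h_2)\bigr) \;=\; L_{r_\inp(D_1)}(h_2) \;\leq\; \gamma+\alpha.
\]
Since $D_1$ is realizable by $C_1$, we have $\OPT_{C_1}(D_1)=0$, hence $r_\outp(h_2)$ is $(\gamma+\alpha)$-optimal for $D_1$. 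This is exactly the conclusion required.

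There is essentially no hard step here: the proof is pure definition chasing, and the only subtle point is to correctly justify the loss-preservation identity by unpacking the definition of the push-forward and noting that $r_\outp$ is defined precisely so that the predicate ``$h_2(r(x))\neq y$'' appears on both sides. If I wanted to be slightly more careful, I would spell out that the $\alpha$-realizability clause in \Cref{def-repr} is invoked via the $\inf$ defining $\OPT_{C_2}$, noting that it suffices to get $\OPT_{C_2}(r_\inp(D_1))\le \alpha$ (rather than requiring an actual witness $c\in C_2$ with $L_{r_\inp(D_1)}(c)\le \alpha$).
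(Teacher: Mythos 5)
Your proof is correct and follows essentially the same route as the paper's: verify that $r_\inp(D_1)$ lands in $P_2$, use $\alpha$-realizability to bound $\OPT_{C_2}(r_\inp(D_1))\le\alpha$, and transfer the loss via the identity $L_{D_1}(r_\outp(h_2))=L_{r_\inp(D_1)}(h_2)$ together with $\OPT_{C_1}(D_1)=0$. No gaps.
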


The representations in the above sense were defined in \cite{PittW90}. One of the most popular reductions, especially in the context of studying the expressivity of kernel methods, is the reduction to the class of half-spaces. The dimensionality of this reduction is also known as a \emph{sign-rank}. We follow \cite{KamathMS20} in its formal definition below (see their Definition 1). Note that there the authors consider this definition with different loss functions and use terms \emph{dimension complexity} for the generalized situation and \emph{sign-rank} specifically for the case of $0/1$ loss. 

\begin{definition}[Sign-rank]
For a class $C\subseteq \{\pm1\}^X$, the \emph{sign-rank} of $C$ is the smallest $d$ for which there exists an exact representation of $C$ by homogenous half-spaces in $\R^d$. That is, such that there are maps $\varphi\colon X\rightarrow \R^d$ and $w\colon C\rightarrow \R^d$ such that for all $c\in C$ and $x\in X$, it holds $\sign \langle w(c), \varphi(x) \rangle = c(x)$.
\end{definition}

Sign-rank and the reductions to SCO are closely related. Examples~\ref{ex:hingeint} and~\ref{ex:LPint} above show that a class of sign-rank at most $d$ can be reduced to SCO in dimension $d$. 
The following result shows that, for finite classes and exact reductions, this almost (with a factor of $+1$) goes in the other direction. We note that $+1$ can be removed if we allow representations by affine halfspaces, rather than with only homogenous, and so in the affine case this connection becomes tight.

\begin{framed}
\begin{theorem}[Half-spaces are complete for exact reductions to SCO]\label{t:halfspaces-complete}
Let $C\subseteq\{\pm 1\}^X$ be a finite concept class.
If for some $\beta<1/2$ and $\alpha$ there exists an exact $(\alpha,\beta)$-reduction from the task of learning $C$ in the realizable case to a stochastic convex optimization task in $\mathbb{R}^d$, then $C$ has an exact representation by homogenous half-spaces in $\mathbb{R}^{d+1}$. In other words, the sign-rank of~$C$ is at most~$d+1$.
\end{theorem}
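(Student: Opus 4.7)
The plan is to extract, from any exact $(\alpha,\beta)$-reduction with $\beta<1/2$, a ``canonical'' parameter $w_c \in W \subseteq \mathbb{R}^d$ for each concept $c \in C$, and then strictly separate these parameters pointwise via a standard convex-geometric argument to obtain an exact representation by homogenous half-spaces in $\mathbb{R}^{d+1}$.

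Passing to the quotient of $X$ under the relation ``$x_1 \sim x_2$ iff $c(x_1)=c(x_2)$ for all $c \in C$'' (which has at most $2^{|C|}$ classes and affects neither the reduction nor the sign-rank), I would first assume WLOG that $X$ is finite. Then for each $c \in C$ the uniform distribution $D_c$ on $\{(x,c(x)) : x \in X\}$ is realizable by $c$, so by exactness $r_\inp(D_c)$ is realizable in the SCO task. Thus there exists $w_c \in W$ with $\E_{z \sim r_\inp(D_c)}[\ell_z(w_c)] = 0$, which, since the losses are non-negative, forces $\ell_{r_\inp(x,c(x))}(w_c)=0$ for every $x \in X$.

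Next, for each $x \in X$, I would introduce the convex sets
\[
K^{\pm}_x \;=\; \{w \in W : \ell_{r_\inp(x,\pm 1)}(w) = 0\},
\]
and argue that $K^+_x$ and $K^-_x$ are \emph{disjoint} whenever both are nonempty. Indeed, any common $w$ would be $\alpha$-optimal for the SCO push-forwards of the realizable point masses $\delta_{(x,+1)}$ and $\delta_{(x,-1)}$, and the $(\alpha,\beta)$-reduction with $\beta<1/2$ would then force $r_\outp(w)(x)=+1$ and simultaneously $r_\outp(w)(x)=-1$, a contradiction. This is the step where the reduction property and the assumption $\beta<1/2$ are genuinely used, and I expect it to be the main conceptual obstacle; the rest is packaging and standard convex geometry.

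To finish, fix $x \in X$ and let $A^{\pm}_x = \{w_c : c(x)=\pm 1\}$, which are finite subsets of $K^{\pm}_x \subseteq \mathbb{R}^d$. Their convex hulls are disjoint (being contained in the disjoint convex sets $K^{\pm}_x$) and compact, so the strict separation theorem yields $a_x \in \mathbb{R}^d$ and $b_x \in \mathbb{R}$ with $\langle a_x, w \rangle > b_x$ on $A^+_x$ and $\langle a_x, w \rangle < b_x$ on $A^-_x$ (the degenerate case where one of $A^{\pm}_x$ is empty is handled by taking $a_x = 0$ and choosing the sign of $b_x$). Defining $w(c)=(w_c,1)\in\mathbb{R}^{d+1}$ and $\varphi(x)=(a_x,-b_x)\in\mathbb{R}^{d+1}$ then gives $\sign\langle w(c),\varphi(x)\rangle = \sign(\langle w_c,a_x\rangle - b_x) = c(x)$, so $C$ is exactly represented by homogenous half-spaces in $\mathbb{R}^{d+1}$, i.e.\ $C$ has sign-rank at most $d+1$.
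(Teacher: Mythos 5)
Your proof is correct and follows the same overall architecture as the paper's: extract a parameter $w_c\in W$ with zero loss on every example consistent with $c$, observe that the resulting sets of parameters for the two labels of each point $x$ lie in disjoint convex sets because $\beta<1/2$ forces $r_\outp(w)(x)=y$ on realizable point masses, strictly separate, and lift the affine separators to homogenous ones in $\mathbb{R}^{d+1}$. The one step where you genuinely diverge is the construction of $w_c$: the paper sets up a zero-sum game between a player choosing $w\in W$ and an adversary choosing $x\in X$, and invokes the minimax theorem (explicitly using finiteness of the class) to produce a single $w_c$ with $L_{r_\inp(D)}(w_c)=0$ for all $c$-realizable $D$. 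You instead apply exactness to the single uniform distribution on the graph of $c$ and use nonnegativity of the losses to conclude $\ell_{r_\inp(x,c(x))}(w_c)=0$ pointwise, which yields the same conclusion more elementarily; the price is that you must first make $X$ finite, which your quotient argument handles (and which the paper's minimax step implicitly needs as well). One small imprecision: the disjointness of $K^+_x$ and $K^-_x$ should be asserted under the hypothesis that \emph{both labels of $x$ are attained by some concept} (so that both point masses are in $P_1$ and the reduction property applies), rather than under nonemptiness of the $K^\pm_x$ themselves; since you only invoke disjointness when both $A^+_x$ and $A^-_x$ are nonempty, this is exactly the case you need, so the argument stands.
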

\vspace{-2mm}
\end{framed}

Note that our setup naturally enables us to consider approximate reductions and representations, where we understand ``approximate'' as ``not exact''. Note that for representations it simply means letting $\alpha > 0$, while in the context of reductions, it means something else, which does not boil down to the parameters $\alpha$ and $\beta$. Formally:
\begin{definition}[Approximate sign-rank and SCO dimension]
For a class $C$ and $\alpha < 1/2$ we define an \emph{$\alpha$-sign-rank} of $C$ as the smallest $d$ for which $C$ has an $\alpha$-representation by homogenous half-spaces in $\R^d$. Similarly, we define an \emph{SCO-dimension} of $C$ as the smallest $d$ for which there is an $(\alpha, \beta)$-reduction, for some $\beta < 1/2$, from the realizable learning of $C$ to an SCO task in~$\R^d$.
\end{definition}

While \Cref{t:halfspaces-complete} says that, in the exact case, the SCO dimension and the sign-rank are essentially equivalent, we know much less about the approximate case. With respect to (approximate) SCO versus the (exact) sign-rank, from the above, we know that 
\begin{align*}
\max\{\mathtt{VC},\mathtt{VC}^\star-1\}\leq \text{SCO} \leq \text{sign-rank}.
\end{align*} 
It is also known that there are classes with constant VC dimension and unbounded sign-rank~\citep{Ben-DavidES02,Alon17}. However, we do not know if SCO can be substantially smaller than the sign rank, and whether it can be upper bounded by a function of the VC dimension or lower bounded by a function of the sign-rank.

At the same time, the following theorem establishes a separation between the exact and approximate sign-ranks.
\begin{framed}
\begin{theorem}[Exact vs approximate sign-rank]\label{t:approxVSexact}
For every integer $d\geq 0$, there exists a finite concept class $C_d$ whose $(1/3)$-sign-rank is at most $d$, and the sign-rank is at least~$d^{\Omega(\log d)}$.
\end{theorem}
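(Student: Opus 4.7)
I would construct $C_d$ from an explicit sign matrix $M_d\in\{\pm 1\}^{N\times N}$ with $N=d^{\Theta(\log d)}$ that is engineered to separate a ``randomized'' notion of sign-rank from the (deterministic) one. Natural candidates come from pattern-matrix constructions in communication complexity: take a Boolean function $f$ with small approximate polynomial degree (equivalently, small PP-communication complexity) but large threshold degree (equivalently, large unbounded-error UPP-complexity). Canonical examples are pattern matrices of the Minsky--Papert function or of $\text{AND}_k\circ\text{OR}_k$, combined with the sign-rank lower bounds of Razborov--Sherstov. The concept class $C_d$ then has columns of $M_d$ as domain points and rows as concepts.

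\paragraph{Upper bound on $(1/3)$-sign-rank.}
The first step is to unpack the definition via minimax: $C_d$ has $(1/3)$-sign-rank at most $d$ if and only if there exists an embedding $\varphi\colon X\to\R^d$ such that for every concept $c\in C_d$ there is a distribution $\nu_c$ over halfspaces in $\R^d$ satisfying $\Pr_{h\sim\nu_c}[h(\varphi(x))\neq c(x)]\le 1/3$ for every $x\in X$. I would then construct $\varphi$ and the distributions $\nu_c$ from the low-complexity structure of $M_d$ (e.g.\ a factorization achieving a small approximate $\gamma_2$ norm), and finally compress any high-dimensional witness down to $\R^d$ by a Gaussian random projection while preserving the pointwise error bound $1/3$ with high probability.

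\paragraph{Lower bound on sign-rank.}
For the lower bound I would invoke one of the standard sign-rank lower bound techniques: Forster's spectral bound $\operatorname{sign-rank}(M)\ge \sqrt{N}/\|M\|$ together with its refinements by Forster--Krause--Lokam--Mehlhorn--Podolskii, or the pattern matrix method of Razborov--Sherstov, which lower-bounds sign-rank in terms of the approximate polynomial degree of the underlying function. Either approach, applied to $M_d$, should yield a sign-rank bound of $N^{\Omega(1)}=d^{\Omega(\log d)}$, matching the claim.

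\paragraph{Main obstacle.}
The main technical difficulty is identifying a single matrix for which both bounds are verifiable with matching parameters. In particular, the approximate representation must be quantitatively matched to dimension $d$: it is tempting to use ordinary approximate rank (pointwise $L_\infty$ approximation by low-rank matrices), but approximate rank $d$ in fact yields \emph{exact} sign-rank $d$, so this measure is too strong. The right tool should instead be the approximate factorization norm $\gamma_2^{1/3}$ or an approximate-margin notion, which yield only approximate sign-agreement per row and therefore permit a genuine separation from sign-rank. Balancing the underlying function and parameters so that $(1/3)$-sign-rank is at most $d$ while sign-rank is at least $d^{\Omega(\log d)}$, and carefully translating between the matrix-theoretic measures and the representation-theoretic formalism of \Cref{def-repr}, is the most delicate part of the argument.
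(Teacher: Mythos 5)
Your proposal is a plan rather than a proof, and the gap is precisely the step you flag as the ``main obstacle'': you never exhibit a concrete class $C_d$ and never verify the upper bound for any candidate. The candidates you name (pattern matrices of Minsky--Papert or $\mathrm{AND}\circ\mathrm{OR}$ \`a la Razborov--Sherstov) are engineered so that the \emph{upper} bound side of the separation is small constant-depth circuit complexity or low threshold weight --- not a low-dimensional approximate halfspace representation. It is not known (and not argued by you) that these matrices admit an embedding $\varphi\colon X\to\R^d$ together with per-concept distributions over halfspaces in $\R^d$ achieving pointwise error $1/3$; indeed your own fallback route (small approximate $\gamma_2$ norm plus a Gaussian projection) would, for a finite matrix with constant margin, give pointwise-correct signs after a union bound in dimension $O(\log N)$, i.e.\ a small \emph{exact} sign-rank, contradicting the lower bound you want. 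So the two halves of your argument pull against each other, and the parameter balancing you defer is exactly the content of the theorem. (Your minimax unpacking of the $(1/3)$-sign-rank for finite classes is correct, and your observation that entrywise approximate rank is too strong a measure is also correct, but neither closes the gap.)

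For contrast, the paper's construction is elementary and avoids all of this machinery. Take $C_d=\{\maj(h_1,h_2,h_3): h_i \text{ halfspaces in } \R^d\}$. The identity map on $\R^d$ is a $(1/3)$-representation by halfspaces: if $D$ is realizable by $\maj(h_1,h_2,h_3)$, then for every example at least two of the three indicators $1[h_i(x)=y]$ equal $1$, so their expectations sum to at least $2$ and some single $h_i$ has error at most $1/3$ on $D$ --- no minimax or randomized representation is needed. The lower bound is imported from \citet{bun21}: intersections of two halfspaces over the Boolean cube have sign-rank $d^{\Omega(\log d)}$, and an intersection of two halfspaces embeds into a majority of three. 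If you want to salvage your approach, the fix is to replace your pattern-matrix candidates with this (or a similarly structured) class for which the approximate representation is verifiable by a direct averaging argument.
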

\vspace{-2mm}
\end{framed}
Note that the class in \Cref{t:approxVSexact} is rather simple: its domain is $\R^d$ and the functions are majority votes of the signs of three homogenous halfspaces in $\R^d$.

A natural way to generalize representations is to allow them to be probabilistic. 
\begin{definition}[Randomized representations]\label{def-repr-rand}
A \emph{$\delta$-confident randomized $\alpha$-representation} of $C_1$ by~$C_2$, or $(\delta, \alpha)$-representation for short, is a distribution $R$ over maps $r:X_1\to X_2$ such that for every distribution $D_1$, realizable by $C_1$, the distribution $r(D_1)$ is $\alpha$-realizable by $C_2$ with probability at least $1-\delta$ over $r\sim R$.
\end{definition}

As before, the most canonical representation is the one by half-spaces in $\R^d$, which gives rise to the definition of a randomized sign-rank$^*$

\begin{definition}[Probabilistic sign-rank$^*$]\label{def-sr-prob-star}
For a class $C$, $\delta>0$, and $\alpha < 1/2$, we define a \emph{probabilistic $\delta$-confident $\alpha$-sign-rank$^*$} of $C$, or $(\delta, \alpha)$-sign-rank$^*$ for short, as the smallest $d$ for which $C$ has a $(\delta,\alpha)$-representation by homogenous half-spaces in $\R^d$. 
\end{definition}

Similar notions of \emph{probabilistic dimension complexity} and \emph{probabilistic sign-rank} were studied in the same paper~\citep{KamathMS20}, see their Definitions~2 and~23. The latter is also a well-known term in communication complexity, see~\citep*{Alman17}. However, due to an additional swerve space from randomization, neither of them is identical to ours; in particular, we write sign-rank$^*$ to separate our notion from the established term in an adjacent area. 

Theorems~\ref{t-prob-sr-margin} and~\ref{t-det-sr-margin} below establish a separation between the probabilistic and the deterministic sign-ranks for a partial class of linear classifiers over $\S^n$ with margin. Note that the bounds on the probabilistic sign-rank in \Cref{t-prob-sr-margin} are applicable (and stated) not only for our sign-rank$^*$, but also for the two of its abovementioned relatives, see the discussion and the definitions in \Cref{sec-prob-sign-ranks}.
\begin{framed}
\begin{theorem}[Probabilistic sign-rank of halfspaces with margin]\label{t-prob-sr-margin}
Let $C_{n}$ be the partial class of linear classifiers with constant margin $\gamma=1/3$ on the $n$-dimensional sphere $\mathbb{S}^n$. Then for $\alpha \in (0, 1/2)$ and $\delta\in (0,1)$, the $(\delta, \alpha)$-sign-rank$^*$ $d$ of $C_{n}$ is at most
    \[d= O\Bigl(\log\frac{1}{\alpha\delta}\Bigr).\]
The randomized $(\delta, \alpha)$-representation witnessing it is linear, that is, the respective distribution is over linear maps $\S^n\rightarrow \R^d$, where we treat $\S^n$ as a unit sphere in~$\R^{n+1}$.

Moreover, probabilistic $\delta$-dimension complexity and probabilistic $\delta$-sign-rank are at most $O(\log(1/\delta))$.
\end{theorem}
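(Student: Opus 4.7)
The plan is to take the random representation $r$ to be a suitably scaled Gaussian projection. Sample a matrix $G\in\R^{d\times(n+1)}$ with i.i.d.\ entries $G_{ij}\sim \NN(0,1/d)$ for $d=C\log\bigl(1/(\alpha\delta)\bigr)$, $C$ a sufficiently large absolute constant, and let the random linear map be $r(x):=Gx$, where $\S^n$ is viewed as the unit sphere in $\R^{n+1}$. Given any realizable distribution $D$ for $C_n$, fix a witness $w^*\in\S^n$ with $y\langle w^*,x\rangle\geq 1/3$ for all $(x,y)\in\supp(D)$, and take $\tilde w:=Gw^*$ as the candidate linear classifier in $\R^d$ for the pushforward $r(D)$.

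The key ingredient is a standard Johnson--Lindenstrauss concentration for bilinear Gaussian forms: for any unit vectors $u,v\in\R^{n+1}$ and any $\eps\in(0,1)$,
\[
\Pr_G\bigl[\,\bigl|\langle Gu,Gv\rangle-\langle u,v\rangle\bigr|>\eps\,\bigr]\leq 4\,e^{-c\eps^2 d},
\]
for an absolute constant $c>0$. This follows from the classical JL norm-distortion bound $\Pr[|\|Gw\|^2-\|w\|^2|>\eps\|w\|^2]\leq 2e^{-c\eps^2 d}$ together with polarization $4\langle Gu,Gv\rangle=\|G(u+v)\|^2-\|G(u-v)\|^2$. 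Applying it with $u=w^*$, $v=x$, and $\eps=1/6$, for every $(x,y)\in\supp(D)$ (so $|\langle w^*,x\rangle|\geq 1/3$) we obtain $\sign\langle Gw^*,Gx\rangle=y$ except with probability at most $4e^{-cd/36}$ over~$G$.

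Write $\mathsf{err}(G):=\Pr_{(x,y)\sim D}[\sign\langle Gw^*,Gx\rangle\neq y]$. Fubini's theorem gives $\Ex_G[\mathsf{err}(G)]\leq 4e^{-cd/36}$, and Markov's inequality then yields
\[
\Pr_G[\mathsf{err}(G)>\alpha]\leq \frac{4e^{-cd/36}}{\alpha}\leq\delta
\]
once $d\geq (36/c)\log\bigl(4/(\alpha\delta)\bigr)$. This is exactly the claimed $(\delta,\alpha)$-representation by linear maps. For the ``moreover'' clause, the per-pair concentration $\Pr_G[\sign\langle Gw^*,Gx\rangle\neq c(x)]\leq 4e^{-cd/36}$ is already the pointwise guarantee that defines the probabilistic $\delta$-dimension complexity and probabilistic $\delta$-sign-rank of \citet{KamathMS20}, so a single application of the JL bound with $d=O(\log(1/\delta))$ suffices for both, without the extra $1/\alpha$ factor.

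There is no deep technical obstacle in this plan; everything reduces to classical Johnson--Lindenstrauss concentration. The only care needed is the order of quantifiers in the last step: the high-probability guarantee must be \emph{over $G$ alone}, which is achieved by using Fubini to swap the expectations over $G$ and $(x,y)\sim D$, and then Markov to convert an expected error into a tail bound. It is also worth emphasising that the distribution over random maps $r$ is itself independent of $D$, as required by \Cref{def-repr-rand}: only the linear classifier $\tilde w=Gw^*$ witnessing $\alpha$-realisability of $r(D)$ depends on the input distribution.
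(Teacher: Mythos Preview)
Your proposal is correct and follows essentially the same approach as the paper: a Gaussian random projection, a Johnson--Lindenstrauss-type sign-preservation bound, then Fubini followed by Markov. The only cosmetic differences are that the paper cites the sign-preservation estimate directly from \citet{Ben-DavidES02} rather than deriving it via polarization, and it routes the final deductions through an auxiliary quantity $\sr^\dag$ and a small proposition relating it to $\sr^*$, $\dc$, and $\sr$, whereas you apply Markov and the pointwise bound directly; neither difference is substantive.
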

\vspace{-3mm}
\end{framed}

\smallskip\noindent
\begin{minipage}{\textwidth}
\begin{framed}
\begin{theorem}[Deterministic sign-rank of halfspaces with margin]\label{t-det-sr-margin}
Let $C_{n}$ be the partial class of linear classifiers with constant margin $\gamma=1/3$ on the $n$-dimensional sphere $\mathbb{S}^n$. Then for $\alpha \in (0, 1/2)$ the (deterministic) $\alpha$-sign-rank $d$ of $C_n$ is at least
    \[d \geq \min\Bigl\{\frac{1-\alpha}{\alpha},n+1\Bigr\}.\]
Moreover, if the respective $\alpha$-representation is continuous, then $d \geq n+1$.
\end{theorem}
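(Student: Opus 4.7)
The plan is to split the proof into the topological (continuous) bound $d\geq n+1$ and the combinatorial bound $d\geq(1-\alpha)/\alpha$; the statement's $\min$ is the minimum of these, and the continuous case collapses to the topological bound.

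For the continuous case, I will assume $r\colon\S^n\to\R^d$ is continuous with $d\leq n$ and invoke Borsuk--Ulam to produce $x\in\S^n$ with $r(x)=r(-x)$. The two-atom distribution $D_x=\tfrac12\delta_{(x,+1)}+\tfrac12\delta_{(-x,-1)}$ is $C_n$-realizable via the concept $c_x$ (margin $1$ at both $x$ and $-x$), so $r_\inp(D_x)$ must be $\alpha$-realizable by homogeneous halfspaces in $\R^d$. But $r_\inp(D_x)$ puts equal mass on $(r(x),+1)$ and $(r(x),-1)$, and every homogeneous halfspace classifies these two identically and hence incurs loss $\tfrac12>\alpha$, contradicting the $\alpha$-representation property.

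For the combinatorial bound, I will assume toward contradiction that $d<\min\{(1-\alpha)/\alpha,\,n+1\}$; then $d\leq n$ and $(d+1)\alpha<1$. The key observation is that, for a uniform distribution on $d+1$ atoms, $\alpha$-realizability forces \emph{zero} misclassification, since the loss is a multiple of $1/(d+1)$ and must be strictly below $\alpha<1/(d+1)$. Choosing orthonormal $x_1,\dots,x_{d+1}\in\S^n\subset\R^{n+1}$ (possible since $d+1\leq n+1$) in the regime $d+1\leq 9$, the witness $w_\sigma=\tfrac1{\sqrt{d+1}}\sum_i\sigma_ix_i$ has margin $1/\sqrt{d+1}\geq 1/3$, certifying that every labeling $\sigma\in\{\pm1\}^{d+1}$ is $C_n$-realizable. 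The exact-realizability forcing then produces, for each $\sigma$, a halfspace $v_\sigma\in\R^d$ with $\sign\langle v_\sigma,r(x_i)\rangle=\sigma_i$ for all $i$. But $d+1$ vectors in $\R^d$ admit a nontrivial linear relation $\sum_i\lambda_i r(x_i)=0$, and the labeling $\sigma_i^\star=\sign(\lambda_i)$ is then unrealizable by any homogeneous halfspace (otherwise $\sum_i\lambda_i\langle v,r(x_i)\rangle$ would be a sum of strictly positive terms rather than zero), a contradiction.

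The main obstacle I anticipate is the regime $d+1>9$, where the orthonormal witness loses its margin---this is exactly the small-$\alpha$ large-$n$ range in which the $\min$ in the statement equals $n+1$ rather than $(1-\alpha)/\alpha$. My plan to close this gap is an $L^1$-covering estimate: the pullback family $\mathcal{G}_r=\{r^{-1}(\{y:\langle v,y\rangle>0\}):v\in\R^d\}$ has VC dimension at most $d$, so its $\alpha$-covering number in $L^1(\mu)$ (with $\mu$ uniform on $\S^n$) is at most $(C/\alpha)^d$, while the positive caps $\{P_w:w\in\S^n\}$ form an $\alpha$-separated packing of size at least $(c/\alpha)^n$; the $\alpha$-representation property forces each cap to be within $L^1$-distance $\alpha$ of some set in $\mathcal{G}_r$, giving $d\gtrsim n$. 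Tracking the antipodal pairings (each $P_w$ paired with $-P_w$) and sharpening the covering constants to reach the exact bound $d\geq n+1$ is the step I expect to demand the most care.
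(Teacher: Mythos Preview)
Your continuous case is exactly the paper's argument: Borsuk--Ulam on $r$ produces $x$ with $r(x)=r(-x)$, and the two-atom distribution forces loss $1/2$.

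Your combinatorial argument, however, has a genuine gap that you yourself flag: it only works when $d+1\leq 9$, because you need the orthonormal witness $w_\sigma$ to achieve margin $1/\sqrt{d+1}\geq 1/3$. For $\alpha<1/10$ and $n\geq 9$ the target bound $\min\{(1-\alpha)/\alpha,\,n+1\}$ exceeds $9$, and your proof says nothing there. The proposed $L^1$-covering patch is not a proof---packing/covering comparisons carry multiplicative constants and would not deliver the sharp inequality $d\geq n+1$.

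The paper sidesteps this by extracting more from your own key observation. You note that on a uniform $(d{+}1)$-atom distribution, $\alpha$-realizability with $\alpha<1/(d{+}1)$ forces \emph{zero} loss. The paper applies this not to a single shattered configuration, but to \emph{every} $(d{+}1)$-subset $T$ of an arbitrary finite $C_n$-realizable sample $S$: each such $T$ has a consistent halfspace, so the convex sets $\{w:\sign\langle w,r(x)\rangle=y\}$ for $(x,y)\in S$ have all $(d{+}1)$-wise intersections nonempty, and Helly's theorem in $\R^d$ gives a single $w$ consistent with all of $S$. Uniform convergence (halfspaces have VC dimension $d$) then upgrades this from finite samples to all realizable distributions, showing the $\alpha$-representation is actually \emph{exact}. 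At that point one simply quotes the known result (Hatami--Hatami--Mehrabian) that the exact sign-rank of $C_n$ is $n+1$, and the bound follows with no restriction on $d$. Your orthonormal-shattering and linear-dependence steps are unnecessary detours; they rederive a fragment of the halfspace VC bound while incurring the margin cap that breaks the argument.
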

\end{framed}
\end{minipage}\medskip

In~\cite{KamathMS20}, the authors ask whether there is an ``infinite'' separation between the probabilistic dimension complexity and the sign-rank. Modulo the fact that our class $C_n$ is partial, the above separation gives a positive answer to this question. Indeed, by \Cref{t-prob-sr-margin}, the $\delta$-dimension complexity of the family of classes $C_n$ above is uniformly bounded, independently of $n$. At the same time, by \Cref{t-det-sr-margin}, the (exact deterministic) sign-rank of $C_n$ is unbounded (although the case of exact sign-rank is also known from Theorem~1.5 in~\cite{HatamiHM22}). Moreover, let
$C=\bigcup_{n=1}^\infty C_n$; here we abuse the fact that our classes are partial and that $C$ is defined on a disjoint union of domains of the classes~$C_n$. This way, a formal union of $(\delta, \alpha)$-representations of $C_n$'s gives a $(\delta, \alpha)$-representation of~$C$. Hence, $C$ is a partial class of bounded $\delta$-dimension complexity and infinite sign-rank. 

At the same time, many natural questions about the sign-rank and its relaxations can further be asked. We know that, for fixed $\alpha \in (0, 1/2)$ and $\delta\in (0,1)$
\[
(\delta, \alpha)\text{-sign-rank}^* \leq \alpha\text{-sign-rank} \leq \text{sign-rank}.
\]
As argued, for a fixed $\delta$ and $\alpha$, there is an infinite separation between the first and the last one. Moreover, for a fixed $\delta$, there is an exponential in $1/\alpha$ separation between the first and the second. 
It is, however, left open whether any two of the ``adjacent'' ranks above are finite together. In particular, we conjecture that, for a fixed $\alpha$, the constant lower bound in~\Cref{t-det-sr-margin} is too weak, and the $\alpha$-sign-rank of $C_n$ goes to infinity with $n$. This, in particular, would imply an infinite separation between the probabilistic and the approximate sign-ranks.

\subsection{A variant of Borsuk-Ulam for closed convex relations}
Our proof of \Cref{t:sco} uses a version of the Borsuk-Ulam theorem that combines the classical topological approach with convexity considerations.
Recall that the original Borsuk-Ulam theorem states that a sphere $\S^d$ cannot be continuously mapped to $\R^d$ without collapsing a pair of antipodal points.
Borsuk-Ulam theorem is a well-established tool in combinatorics, brought to light by Lov{\'a}sz's proof of Kneser conjecture~\citep{lovasz78}, but which since then developed into a central tool in topological combinatorics, see~\citep*{matouvsek03}. More recently, it has found applications in TCS and learning theory, see~\cite{HatamiHM22}, \cite*{chase2024dual, chase2024local}.
\begin{framed}
\begin{theorem}[Borsuk-Ulam for closed convex relations]\label{t:Borsuk-Ulam}
Let $W$ be a compact convex set in $\mathbb{R}^k$ and let $G$ be a closed set inside $\mathbb{S}^d \times W$, where $\mathbb{S}^d$ is a $d$-dimensional unit sphere. Additionally, suppose that:
\begin{itemize}
    \item For any $x\in \mathbb{S}^d$, the set $G_x = \{w\in W~|~(x,w)\in G\}$ is nonempty and convex;
    \item For any $x\in \mathbb{S}^d$, the sets $G_x$ and $G_{-x}$ are disjoint.
\end{itemize}	
Then $k\geq d+1$.
\end{theorem}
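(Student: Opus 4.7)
The approach is to reduce to the classical Borsuk--Ulam theorem by building continuous piecewise-linear approximations of the relation $G$ and then passing to a limit, using closedness of $G$ together with convexity of each slice $G_x$ at the limiting stage. I argue by contradiction, assuming $k \leq d$.

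For each $n \geq 1$, fix a triangulation $T_n$ of $\S^d$ of mesh at most $1/n$. Using the axiom of choice, pick a selection $s_n(v) \in G_v$ for every vertex $v$ of $T_n$, and extend it continuously to a map $f_n : \S^d \to \R^k$ by barycentric interpolation inside each $d$-simplex. Since $W$ is convex and each $s_n(v) \in W$, the map $f_n$ takes values in $W \subseteq \R^k$. Because $k \leq d$, the classical Borsuk--Ulam theorem (applied to the composition with the inclusion $\R^k \hookrightarrow \R^d$) yields an antipodal pair $x_n \in \S^d$ with $f_n(x_n) = f_n(-x_n)$.

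By compactness of $\S^d \times W$, pass to a subsequence to arrange $x_n \to x$ and $f_n(x_n) = f_n(-x_n) \to w$. I claim $w \in G_x \cap G_{-x}$, contradicting the disjointness hypothesis. To verify $w \in G_x$, write $x_n = \sum_i \lambda_i^{(n)} v_i^{(n)}$ where the $v_i^{(n)}$ are the vertices of the simplex of $T_n$ containing $x_n$. Since the mesh of $T_n$ goes to zero, each $v_i^{(n)} \to x$; passing to a further subsequence, assume $\lambda_i^{(n)} \to \lambda_i$ and $s_n(v_i^{(n)}) \to w_i$. Because $\bigl(v_i^{(n)}, s_n(v_i^{(n)})\bigr) \in G$ and $G$ is closed, $(x, w_i) \in G$, i.e.\ $w_i \in G_x$. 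Convexity of $G_x$ then gives $w = \sum_i \lambda_i w_i \in G_x$. The identical argument applied at the antipodal points $-x_n$ yields $w \in G_{-x}$, completing the contradiction.

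The main obstacle is that one cannot simply extract a global continuous selection $f : \S^d \to W$ with $f(x) \in G_x$ and then appeal to Borsuk--Ulam directly: Michael's selection theorem requires lower semicontinuity, whereas the closedness of $G$ only gives upper semicontinuity of the correspondence $x \mapsto G_x$. The workaround above exploits convexity \emph{locally} inside each simplex and uses closedness of $G$ together with convexity of the slices to pass the antipodal coincidence through the limit. This is essentially a concrete instance of Cellina's approximation theorem for upper semicontinuous correspondences with convex values; one could alternatively deduce the result by citing Cellina's theorem and directly applying classical Borsuk--Ulam to the approximants.
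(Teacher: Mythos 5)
Your proof is correct, and it implements precisely the alternative the paper alludes to in its proof overview (``the proof can be carried out within a more topological toolset by using dense triangulations instead of the last two''). The core strategy is the same in both arguments: build a continuous map $\S^d\to W$ whose value at $x$ is a convex combination of selections $s(v)\in G_v$ for nearby points $v$, apply the classical Borsuk--Ulam theorem to this approximant, and use closedness of $G$ together with convexity of the slices to turn the antipodal coincidence into a point of $G_x\cap G_{-x}$. The difference is in the execution of the two halves. For the approximation step, the paper uses a finite cover by $\delta$-balls and a partition of unity, whereas you use a fine triangulation and barycentric interpolation; these are interchangeable. For the limiting step, the paper runs a quantitative argument: it first establishes $\mathrm{dist}(G,-G)>\varepsilon$ (using that $G$ and $-G$ are compact and disjoint), proves a uniform-continuity lemma via Carath\'eodory's theorem (Lemma~\ref{l:ConvexLemma}) showing that local convex hulls of $G$ stay $\varepsilon/4$-close to $G$, and derives $\mathrm{dist}(G,-G)\le\varepsilon$ from a single approximant with $\delta$ small enough. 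You instead take a sequence of approximants, extract convergent subsequences of the (boundedly many) vertices, weights, and selections, and pass the coincidence point to the limit. Your route avoids both Carath\'eodory and the uniform-continuity lemma at the cost of a compactness/diagonalization argument; the paper's route is non-sequential and makes the dependence of $\delta$ on $\varepsilon$ explicit. Both are complete proofs, and your closing remarks correctly diagnose why a single global continuous selection is unavailable (only upper semicontinuity holds) and correctly place the construction as an instance of Cellina's approximation theorem.
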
    
\vspace{-3mm}
\end{framed}
The original Borsuk-Ulam theorem can be seen as a special case of \Cref{t:Borsuk-Ulam} by setting $G = {(x, f(x)) : x \in \mathbb{S}^d}$ as the graph of $f$ and $W$ as the convex hull of the image of $f$. 

While formally for the proof of \Cref{t:sco} we use not \Cref{t:Borsuk-Ulam} itself, but rather its close relative \Cref{t:BU-open}, the latter is technical and its statement is less self-sufficient.

\section{Proof overview}\label{sec:proofoverview}
Most of the proofs use, in one or another way, the standard toolset of the learning theory, such as the VC dimension, the sign-rank, loss estimates, etc., see \Cref{sec:prelim} below for a brief overview. In particular, the central definitions that we introduce and explore, namely, learning task (\Cref{def:task}), reduction (\Cref{def:red}), and representation (\Cref{def-repr}), are quite typical, although to our knowledge, our formalization of them is new. Below, we outline the use of techniques that are either uncommon for this area, or that are in some sense specialized.

The main technical ingredient in the proof of reduction from classification to SCO (\Cref{t:sco}) is a version of Borsuk-Ulam for closed convex relations (\Cref{t:Borsuk-Ulam}). The proof of the latter relies on the classical Borsuk-Ulam together with some tools from convex geometry (Carath\'eodory's theorem) and real analysis (partitions of unity), although the proof can be carried out within a more topological toolset by using dense triangulations instead of the last two. The proof of \Cref{t:sco} itself then uses the topological approach to PAC-learning problems similar to the one in~\cite{chase2024local}, see the proof of Theorem~D there. Namely, the set of realizable distributions of a class is topologized by equipping it with a total variation metric and the antipodality that comes from flipping the signs of the labels. The application of a Borsuk-Ulam-like theorem then hinges on two crucial observations: i) that no hypothesis can simultaneously achieve loss $<1/2$ on two antipodal distributions, and
ii) that both VC and dual VC dimensions enforce the existence of spheres of comparable dimension in the space of realizable distributions.

We would like to remark that deriving \Cref{t:sco} does not seem to follow from standard sample-complexity-based considerations. While this result lower-bounds the dimension of the SCO task in terms of the VC dimension of the class, reduced to it, it would be interesting to determine whether, conversely, the minimal dimension $d$ for which a VC class $H$ is reducible to SCO in $\mathbb{R}^d$ can be \emph{upper bounded} in terms of the VC dimension. Additionally, it would be interesting to explore whether a more elementary proof of \Cref{t:sco} can be found.

The completeness of halfspaces for exact reductions to SCO (\Cref{t:halfspaces-complete}) is proven using LP duality for a certain game played on the space $W$, associated with the SCO task. The result about exact reductions to $\infty$-valued SCO (\Cref{t:sco2}) is an easy consequence of it and of the fact that both VC and dual VC lower-bound the sign-rank~\cite{Alon17}.

The separation between exact and approximate sign-ranks(\Cref{t:approxVSexact}) is witnessed by a specific class, whose approximate sign-rank is easily upper-bounded, and sign-rank was shown to be high by~\citet*{bun21}. For the other separation, between the probabilistic and approximate sign-ranks (Theorems~\ref{t-prob-sr-margin} and~\ref{t-det-sr-margin}), the probabilistic sign rank of the class in question is upper-bounded using a relative of the Johnson-Lindenstrauss theorem, and its approximate sign-rank is lower-bounded by combining uniform convergence (see \Cref{t:UnifromConvergence} below) with the result in~\cite{HatamiHM22}, establishing the sign-rank for this class; interestingly enough, the latter is also proved by a topological argument that uses Borsuk-Ulam.

\section{Examples}\label{sec:examples}
\subsection{Examples demonstrating the tightness of Theorem~\ref{t:sco}}\label{sec:t1-tightness}

The following is an example of a simple class for which the dual VC dimension, and hence the SCO dimension, is exponential in the VC dimension.
\begin{example}[Projection functions] \label{ex:proj}
Let $d$ be the dimension, and consider the domain $X = \{\pm 1\}^d$ and the class 
$U_d = \{h_i : i \leq d\}$
of all projection functions defined by $h_i(x_1, \ldots, x_d) = x_i$ on $(x_1,\ldots,x_d)\in X$.  
This class satisfies: (i) $\vc(U_d)\leq \log d$, (ii) $\vc^\star(U_d) = d$, and (iii) $U_d$ is reducible to stochastic convex optimization in $\mathbb{R}^d$. This is because $U_d$ can be represented by half-spaces in $\mathbb{R}^d$: we embed $X$ into $\mathbb{R}^d$ trivially (since $X \subseteq \mathbb{R}^d$) and represent each concept $h_i$ as the half-space $(x_1, \ldots, x_d) \mapsto \sign(x_i)$. 
\end{example}

From the PAC learning perspective, $\beta$-optimality for $\beta < 1/2$ corresponds to weak learnability. That is, a hypothesis $h$ which is $(<1/2)$-optimal for a distribution $D$ provides some information about~$D$. On the other hand, $1/2$-optimality can always be achieved by a random guess, and hence carries no information about the distribution. In the world of reductions, this is paralleled by the next two examples that show that a learning task can always be reduced to a $1$-dimensional SCO for $\beta=1/2$, as long we allow our hypotheses to be randomized.

\begin{example}[PAC-learning with randomized hypotheses]\label{ex:pac-rand}
In the setup of \Cref{ex:pac}, it is often useful to allow a learner to use randomized hypotheses. We model this by defining $H=[-1,1]^X$ where a randomized hypothesis $h: X\to [-1,1]$ on $x$ outputs $1$ with probability $\frac{1+h(x)}{2}$ and $-1$ with probability $\frac{1-h(x)}{2}$. The loss function is then defined by $\ell_z(h)=\frac{1}{2}|h(x)-y|$, which corresponds to the expected 0/1 loss of $h$.  

Note that $\ell_z(h)$ on a ``deterministic'' hypothesis $h$ with values in $\{\pm1\}$ is just a usual 0/1 loss, and hence this model generalizes the classical PAC setting. While this approach is less used in the PAC-learning paradigm, this generalization is common, for example, in the online learning setting, see Section~1.2.2 in~\cite{MAL-018}. For us, it is useful in the context of reductions from PAC-learning to stochastic convex optimization, see~\Cref{ex-trivial-reduction} below.  
\end{example}

\begin{example}[A trivial reduction]\label{ex-trivial-reduction}
This example demonstrates that any concept class $C \subseteq \{\pm 1\}^X$ can be reduced to a one-dimensional stochastic convex optimization problem with a classification error of $\beta=1/2$. This highlights the tightness of the assumption $\beta<1/2$ in Theorem~\ref{t:sco}.

Consider the task of learning a concept class $C \subseteq \{\pm 1\}^X$ in the realizable setting using a randomized learning rule. So, $H = [-1,1]^X$, where each $h \in H$ represents a random hypothesis that outputs $1$ with probability $\frac{1+h(x)}{2}$ and $-1$ with probability $\frac{1-h(x)}{2}$. The loss is the expected zero-one loss, $\ell_{(x,y)}(h) = \frac{1}{2} \lvert h(x) - y \rvert$.

We reduce to a stochastic convex optimization task with $W = [-1, 1]$, $Z = \{\pm 1\}$, and $\ell_z(w) = \frac{1}{2} \lvert z - w \rvert$. The reduction is defined by $r_\inp(x, y) = y$ and $r_\outp(w) = h_w$, where $h_w(x) = w$ for all $x \in X$. 
We claim that for $\alpha\in[0,1/2]$, the above is an $(\alpha,\frac{1+\alpha}{2})$-reduction. In particular, for $\alpha=0$ it is a $(\alpha=0,\beta=\frac{1}{2})$-reduction.

\begin{proof}Let $D_1$ be a distribution realizable by $C$ and $D_2 = r_{\inp}(D_1)$.
Notice that $L_{D_1}(h_w) = L_{D_2}(w)$ for all $w\in[-1,+1]$.
Let $w^\star\in  W$ be $\alpha$-optimal with respect to $D_2$.
We need to show that $r_\outp(w^\star)=h_{w^\star}$ is $\frac{1+\alpha}{2}$-optimal with respect to $D_1$. 
Define $p_+=\Pr_{(x,y)\sim D_1}[y=+1]$ and $p_- = \Pr_{(x,y)\sim D_1}[y=-1]$. 
For all $w\in W$:
\[
L_{D_2}(w)=\frac{p_+|1-w|+p_-|1+w|}{2}=\frac{1-(p_+-p_-)w}{2}.
\] 
Optimizing for the above yields that minimum of $L_{D_2}$ on $W$ is $\min\{p_+,p_-\}$. 
Hence, since $w^\star$ is $\alpha$ optimal for $D_2$, then $L_{D_1}(h_{w^\star}) = L_{D_2}(w^\star)\leq \alpha+\min\{p_+,p_-\}$. And since the maximum of $L_{D_2}$ is $\max\{p_+,p_-\}$, this bound can be improved to 
\begin{align*}
L_{D_1}(h_{w^\star})&\leq \min\bigl\{\alpha+\min\{p_+,p_-\},\max\{p_+,p_-\}\bigr\}\\
&\leq \frac{\alpha + \min\{p_+,p_-\} + \max\{p_+,p_-\}}{2} 
= \frac{1+\alpha}{2}.
\end{align*}
\end{proof}
\end{example}

The following example demonstrates that if we remove the requirement for convex loss functions, any finite concept class can be reduced to a one-dimensional learning task with continuous loss functions.
\begin{example}[A non-convex reduction to one dimension]\label{ex:non-convex-reduction}
Consider the task of learning a finite concept class $C \subseteq \{\pm 1\}^X$ in the realizable setting using a randomized learning rule. Define the following learning task: Let  $Z=X\times\{\pm 1\}$, $W=[0,1]$, and for each $c\in C$ pick some unique $w_c\in W$. For each example  $(x,y)\in Z$ define the set $V_{x,y}=\{w_c\in W\;:\; c(x)=y\}$. Now define the loss function $\ell_{x,y}$ by 
\[
\ell_{x,y}(w)=\frac{d(w,V_{x,y})}{d(w,V_{x,y})+d(w,V_{x,-y})}.
\]
Note that $\ell_{x,y}$ is continuous (but not convex).
Define the following reduction from $C$ to the above learning task
\begin{align*}
    &r_\inp(x,y)=(x,y),
\\&r_\outp(w)(x)=\sign \big(\ell_{x,-1}(w)-\ell_{x,1}(w) \big).
\end{align*}
We claim that for all 
$\alpha>0$ the above is an exact $(\alpha,2\alpha)$-reduction.
\begin{proof}
The above reduction is exact because $L_{r_\inp(D)}(w_c)=0$ for any distribution $D$ which is realizable by $c\in C$. To show it is $(\alpha,2\alpha)$-reduction it is enough to prove that \[
L^{01}_D\big(r_\outp(w)\big)\leq 2L_{r_\inp(D)}(w).
\]
Note that $\ell_{x,-1}(w)+\ell_{x,1}(w)=1$. Thus, $r_\outp(w)(x) \neq y$ implies $l_{x, y}(w)\geq 1/2$. But then
\begin{align*}
    L^{01}_D&\big(r_\outp(w)\big) =
        \Ex_{(x,y)\sim D} \left[r_\outp(w)(x)\neq y\right] \\
        &\leq \Ex_{(x,y)\sim D} 2l_{x, y}(w)
        = 2 \Ex_{(x,y)\sim r_\inp(D)} l_{x, y}(w) = 2L_{r_\inp(D)}(w),
\end{align*}
as needed.
\end{proof}
\end{example}

\subsection{Reductions to SVM}\label{sec:reductions-to-SVM}
In this subsection, we prove statements about the reductions to SVM from \Cref{ex:hingeint} and \Cref{ex:LPint}. We restate the above examples in an abridged form, to make apparent the claims that we prove.

\begin{customexample}{\ref{ex:hingeint}}[SVM with unregularized hinge loss]
	This SVM is an SCO task with $W=\mathbb{R}^{d+1}$, $Z=\mathbb{R}^d\times\{\pm 1\}$, and, for $(w, a)\in \R^d \times \R = W$,
    \[
    \ell_{x,y}(w,a)=\max\big(0,1-y(\langle w,x\rangle+a)\big).
    \]
    Define the reduction  $r$ from the task of learning affine halfspaces in $\R^d$ to this task by
    \begin{align*}
    &r_\inp (x,y)=(x,y),
    \\&r_\outp (w,a)=c_{w,a},
\end{align*}
where $c_{w,a}(x) =  \sign(\langle w, x \rangle+a)$.
Then for any $\alpha>0$, $r$ is an $(\alpha,\alpha)$-reduction.
\end{customexample}
\begin{proof}
%
Recall that $\ell^{01}_{x,y}(w,a) = [\sign(\langle w,x\rangle+a)\neq y]$. In particular, $\ell^{01}_{x,y}(w,a) = 1$ means that $y(\langle w,x\rangle+a) \geq 0$ and hence $\ell_{x,y}(w,a)\geq 1$. So, for any realizable $D$ and $w, a\in W$, $L^{01}_D(c_{w,a})\leq L_{D}(w,a)$.
Thus, to prove the validity of the reduction, it suffices to show that $\OPT_W(D)=0$ for all $D$ realizable by $d$-dimensional half-spaces. 

Let $D$ be a distribution realizable by $c_{w,a}$ 
and let $\varepsilon>0$, we claim that for $n$ large enough we have
\[
L_D(nw,na +1)<\varepsilon.
\]

Indeed, 
\begin{align*}
L_D&(nw,na +1) = \Ex_{x, y\sim D} \max\bigl(0, 1 - y(\langle nw, x\rangle + na + 1)\bigr) \\
	&\leq \Ex_{\substack{x, y\sim D\\y(\langle nw, x\rangle + na + 1) < 1}} \max\bigl(0, 1 - y(\langle nw, x\rangle + na + 1)\bigr) \\
	&= \Ex_{\substack{x, y\sim D\\ny(\langle w, x\rangle + a)  < 1 - y}} \max\bigl(0, 1 - y(\langle nw, x\rangle + na + 1)\bigr) \\
	&= \biggl[y(\langle w,x\rangle+a) \geq 0 \text{ for } x, y\sim D \biggr]\\
	&= \Ex_{\substack{x, y\sim D;~~y = -1\\0\leq -n(\langle w, x\rangle + a)  < 2}} \max\bigl(0, 1 + 1(\langle nw, x\rangle + na + 1)\bigr) \\
	&= \Ex_{\substack{x, y\sim D;~~y = -1\\-2 < n(\langle w, x\rangle + a)  \leq 0}} \max\bigl(0, 2 + n(\langle w, x\rangle + a)\bigr) 
	\leq \Ex_{\substack{x, y\sim D;~~y = -1\\-2 < n(\langle w, x\rangle + a)  \leq 0}} 2 \\
	&= 2 \Pr_{x, y\sim D} \bigl[ -2/n < \langle w, x\rangle + a \leq 0  \text{ and } y=-1 \leq 0 \bigr]\\
	&\xrightarrow[n\rightarrow \infty]{} 2 \Pr_{x, y\sim D} \bigl[\langle w, x\rangle + a  = 0\text{ and } y=-1 \leq 0 \bigr] = 0.
\end{align*}
\end{proof}

\begin{customexample}{\ref{ex:LPint}}[Hard SVM]
	This SVM is an SCO task with $W=\mathbb{R}^{d}$, $Z=\mathbb{R}^d\times\{\pm 1\}$, and
    \[
	\ell_{x,y}(w)=\case{||w||^2}{\sign\big(\langle x,w\rangle\big)=y,}{\infty}{\text{otherwise.}}
    \]
    Define the reduction  $r$ from the task of learning homogenous halfspaces in $\R^d$ to this task by
    \begin{align*}
    &r_\inp (x,y)=(x,y),
    \\&r_\outp (w)=c_{w},
\end{align*}
where $c_{w}(x) = c_{w, 0}(x) =  \sign(\langle w, x \rangle)$.
Then for any $\alpha>0$, $r$ is an $(\alpha,0)$-reduction.
\end{customexample}
\begin{proof}
    Note that for every distribution $D$ which is realizable by half-spaces and every $w\in W$ we have that $L_{r_\inp(D)}(w)$ is $||w||^2$ if $\Pr_{(x,y)\sim D}[\sign\big(\langle x,w\rangle\big)=y]=1$ and infinite otherwise. 
    Hence, for any finite~$\alpha$, every $\alpha$-optimal solution for $L_D$ achieves a loss of zero on the zero-one loss function $L_D^{01}$. Thus,
    $r$ is indeed an  $(\alpha,0)$-reduction for all $\alpha>0$. And, by rescaling $w$ we get that $\OPT_W(D)=0$ so the reduction is exact.
\end{proof}

\section{Preliminaries}\label{sec:prelim}

We use standard notation from learning theory, see e.g.\ \cite{shays14}. 
 In general, $X$ will denote the domain and $C$ a concept class of functions from $X$ to $\{\pm 1\}$. For a given collection of loss functions $\{\ell_z\}_{z\in Z}$ for each  distribution $D$ over $Z$ and finite sample $S\subseteq Z$ we define the induced loss functions \begin{align*}
     &L_D(w)=\Ex_{z\sim D}\ell_z(w),
     \\& L_S(w)=\frac{1}{|S|}\sum_{z\in Z}\ell_z(w).
 \end{align*}
 Whenever there can be some confusion with other loss functions, we will use $\ell_{x,y}^{01}(c)$ (and similarly $L_D^{01}$) to denote the zero-one loss function which is $0$ if $c(x)=y$ and $1$ otherwise.

\begin{definition}[VC dimension]
We say that a set $\{x_i\}_{i=1}^n$ is shattered  by a concept class $C$ if for any labeled sample $S=\{(x_i,y_i)\}_{i=1}^n$ over this set  there is some $c\in C$ such that $L_S(c)=0$. The VC dimension $\vc(C)$ of $C$ is the largest number $n$ such that $C$ shatters a set of size $n$, or infinity if $C$ shatters sets of arbitrary size.
\end{definition}

\begin{definition}[Dual VC dimension]
For a fixed concept class $C$ over a domain $X$, each $x\in X$ defines a function $h_x :C \to \{\pm1\}$ by $h_x(c)=c(x)$. The class $C^\star=\{h_x \;:\; x\in X\}$ is called a dual class of $C$. The dual VC dimension of $C$ is the VC dimension of $C^\star$, $\vc^\star(C)=\vc(C^\star)$.
\end{definition}

We say that a  concept class $C$ satisfies uniform convergence if there exists a vanishing sequence $\eps_n\xrightarrow{n\to\infty} 0$ such that for all distributions $D$ over $X\times\{\pm1\}$,
\[\Ex_{S\sim D^n}\Big(\sup_{c\in C}\lvert \mathtt{L}_D(c) - \mathtt{L}_S(c) \rvert\Big)\leq \eps_n.\]
The seminal work by~\cite*{VC71} shows that any concept class $C$ with finite VC dimension satisfies the uniform convergence.  The following asymptotically optimal quantitative bound was achieved in the seminal work by~\cite*{Talagrand94chaining} using a technique called chaining~\citep*{dudley:78}.

\begin{theorem}[Uniform convergence for VC classes]\label{t:UnifromConvergence}
Let $C$ be a concept class with finite VC dimension $\vc(C)=d$. Then for any $n>0$ and a distribution $D$ over $X\times \{\pm1\}$ we have
\[
\Ex_{S\sim D^n}\Big(\sup_{c\in C}|L_D(c)-L_S(c)|\Big)=O\bigl(\sqrt{d/n}\bigr).
\]
    
\end{theorem}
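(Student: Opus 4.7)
The plan is to follow the standard route via symmetrization and Dudley's chaining, with the key refinement being the use of Haussler's sharp covering-number bound (rather than the Sauer-Shelah bound alone) so that no logarithmic factor appears in the final rate.

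First, I would apply the classical symmetrization argument. Introduce an i.i.d.\ ``ghost sample'' $S'\sim D^n$ and Rademacher signs $\sigma_1,\dots,\sigma_n\in\{\pm 1\}$ drawn uniformly and independently. A short manipulation shows
\[
\Ex_{S\sim D^n}\Bigl(\sup_{c\in C}|L_D(c)-L_S(c)|\Bigr)\;\leq\;2\,\Ex_{S,\sigma}\Bigl(\sup_{c\in C}\Bigl|\tfrac{1}{n}\sum_{i=1}^n \sigma_i\,\ell^{01}_{z_i}(c)\Bigr|\Bigr),
\]
so it suffices to bound the empirical Rademacher complexity $\mathcal R_n(C;S)$ of the induced loss class, which, since losses are $0/1$-valued and indicators are in bijection with $C$, is, up to constants, the Rademacher complexity of $C$ viewed as a $\{0,1\}$-valued function class on the sample.

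Second, I would bound $\mathcal R_n(C;S)$ for a fixed sample $S=(z_1,\dots,z_n)$ by Dudley's chaining. Equip the projected set $C|_S\subseteq\{0,1\}^n$ with the empirical $L_2$ pseudo-metric $d_S(c,c')=\bigl(\tfrac 1 n\sum_i(c(x_i)-c'(x_i))^2\bigr)^{1/2}$, and let $N(C|_S,d_S,\varepsilon)$ denote the $\varepsilon$-covering number. Dudley's entropy integral then yields
\[
\mathcal R_n(C;S)\;\leq\;\frac{K}{\sqrt n}\int_0^1 \sqrt{\log N(C|_S,d_S,\varepsilon)}\,d\varepsilon
\]
for an absolute constant $K$. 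Here I would be slightly careful with the chaining to make it discretize around dyadic scales $\varepsilon_k=2^{-k}$ and pay for only one successor per element.

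Third, I would plug in Haussler's refinement of the Sauer--Shelah lemma, which gives, for any $C$ of VC dimension $d$ and any sample,
\[
N(C|_S,d_S,\varepsilon)\;\leq\;\bigl(K'/\varepsilon\bigr)^{Cd}
\]
with absolute constants $K',C$ (crucially, the exponent is $O(d)$ rather than the $O(d\log(1/\varepsilon))$ one would get from Sauer--Shelah directly). Substituting this bound into the entropy integral gives
\[
\int_0^1 \sqrt{Cd\,\log(K'/\varepsilon)}\,d\varepsilon\;=\;O(\sqrt d),
\]
which, combined with the $1/\sqrt n$ prefactor and the symmetrization, produces the desired $O(\sqrt{d/n})$ bound after taking the expectation over $S$.

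The main obstacle is the third step: Sauer--Shelah alone gives only $O(\sqrt{d\log n / n})$, and removing the logarithmic factor is exactly what Haussler's covering-number bound accomplishes. Accordingly, I would invoke Haussler's packing lemma as a black box. The symmetrization and chaining arguments are entirely standard once that ingredient is in hand.
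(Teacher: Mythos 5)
Your outline is correct and matches the approach the paper itself points to: the paper does not prove this theorem but cites it as Talagrand's chaining bound, and your symmetrization + Dudley entropy integral + Haussler covering-number argument is precisely the standard proof of that result, with Haussler's bound correctly identified as the ingredient that removes the $\log n$ factor.
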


\section{Proofs for reductions to SCO}\label{sec-proofs-sco}
Our aim for this section is to prove \Cref{t:sco} and \Cref{t:halfspaces-complete}. However, a central technical ingredient here is \Cref{t:BU-open} below, which is a close relative of the Borsuk-Ulam for closed convex relations (\Cref{t:Borsuk-Ulam}), but with a more technical statement.
For accessibility, in this and the following sections we restate the theorems that we are going to prove.

\begin{theorem}[Borsuk-Ulam for certain relations]\label{t:BU-open}
Let $W$ be a compact convex set in $\mathbb{R}^k$ and let $G\subseteq \mathbb{S}^d \times W$. Additionally, suppose that:
\begin{itemize}
    \item For any $x\in \mathbb{S}^d$, the set $G_x = \{w\in W~|~(x,w)\in G\}$ is nonempty;
    \item For any $x\in \mathbb{S}^d$, the convex hulls of $G_x$ and $G_{-x}$ are disjoint;
    \item For any $w\in W$, the set $G^w =\{x\in \mathbb{S}^d \;:\; (x,w)\in G\}$ is open.
\end{itemize}	
Then $k\geq d+1$.
\end{theorem}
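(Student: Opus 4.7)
The plan is to reduce \Cref{t:BU-open} to the classical (odd) Borsuk-Ulam theorem by constructing, from the data $(G,W)$, a continuous odd map $\S^d \to \R^k \setminus \{0\}$. Normalizing such a map yields a continuous odd map $\S^d \to \S^{k-1}$, which by the classical Borsuk-Ulam exists only when $k-1 \geq d$, giving the desired bound $k \geq d+1$.

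The heart of the argument is a continuous selection step. Since $G_x$ is nonempty for every $x \in \S^d$, the family $\{G^w\}_{w \in W}$ is an open cover of $\S^d$: each $x$ lies in $G^w$ for any $w \in G_x$, and every $G^w$ is open by hypothesis. Compactness of $\S^d$ yields a finite subcover $G^{w_1},\ldots,G^{w_n}$. Pick a partition of unity $\{\rho_i\}_{i=1}^n$ subordinate to this subcover and define
\[
g(x) \;=\; \sum_{i=1}^n \rho_i(x)\, w_i.
\]
Then $g:\S^d \to W$ is continuous, lands in $W$ because $W$ is convex and contains each $w_i$, and for every $x$ the indices with $\rho_i(x) > 0$ satisfy $x \in G^{w_i}$, i.e.\ $w_i \in G_x$. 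Thus $g(x) \in \conv(G_x)$ pointwise, a continuous selector from the convex-hull-valued relation.

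Now set $f(x) = g(x) - g(-x)$. This is continuous and odd by construction. Since $g(x) \in \conv(G_x)$, $g(-x) \in \conv(G_{-x})$, and the two convex hulls are disjoint by hypothesis, $f(x) \neq 0$ for every $x \in \S^d$. The normalized map $\tilde f(x) = f(x)/\|f(x)\|$ is then a continuous odd map $\S^d \to \S^{k-1}$, and the odd form of Borsuk-Ulam forces $k-1 \geq d$, as required.

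The main obstacle, and the point where all hypotheses do their work, is producing the continuous selector $g$: the openness of each fiber $G^w$ is exactly the ``lower hemicontinuity'' needed to pass from a pointwise nonempty set-valued map to a continuous single-valued selection in its convex hull via partitions of unity. The convexity of $W$ keeps the selection inside $W$, and the disjointness of antipodal convex hulls is precisely what ensures $f$ avoids $0$ so that normalization and Borsuk-Ulam can finish the argument.
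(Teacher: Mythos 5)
Your proposal is correct and follows essentially the same route as the paper: an open cover $\{G^w\}$, a finite subcover, a partition of unity giving a continuous selector into $\conv(G_x)$, and then Borsuk--Ulam applied using disjointness of the antipodal convex hulls. The only cosmetic difference is that you pass to the odd map $f(x)=g(x)-g(-x)$ and normalize, whereas the paper directly invokes the ``no antipodal-point-identifying map $\mathbb{S}^d\to\R^k$ for $k\le d$'' form; these are equivalent formulations of the same theorem.
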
    
\begin{proof}
Note that $x\in G^w$ if and only if $y\in G_x$, and, as $G_x$ is nonempty for all $x$, the family $\{G^w\}_{w\in W}$ is an open cover of $\mathbb{S}^d$. By compactness, there is a finite $T\subset W$ such that  $\{G^t\}_{t\in T}$ is a finite open cover of $\mathbb{S}^d$. Let $\{\rho_t\}_{t\in T}$ be a partition of unity subordinate to this cover, that is, a family of continuous functions parameterized by $t\in T$ such that each $\rho_t$ is $0$ outside of $G^t$ and for all $x\in \mathbb{S}^d$, 
$\sum_{t\in T}\rho_t(x)=1$. Partitions of unity are a well-known tool in real analysis, see Theorem 2.13 in~\cite{rudin87}. We can also explicitly define
\[
\rho_t(x)=\frac{d(x,\mathbb{S^d}\setminus G^t)}{\sum_{t'\in T}d(x,\mathbb{S^d}\setminus G^{t'})}.
\]
Define $\Phi : \mathbb{S}^d\to W$ by \[
\Phi(x)=\sum_{t\in T}\rho_t(x)t.
\]
Note that $\rho_t(x)>0$ implies that $t\in G_x$, so $\Phi(x)$ is a convex combination of elements in $G_x$. Then, as $\Phi(x)$ and $\Phi(-x)$ are in the convex hulls of $G_{x}$ and $G_{-x}$ respectively and the latter are disjoint, $\Phi(x)\neq \Phi(-x)$. By the Borsuk-Ulam theorem, this implies $k\geq d+1$, as needed.
\end{proof}

\begin{lem}\label{l:Inf-Lin-Contin}
    Let $I\subset \mathbb{R}^d$ be an nonempty collection of pointwise positive vectors in $\mathbb{R}^d$, that is, such that $\alpha_i >0$ for all $\alpha\in I$ and $i=1, \dots, d$. Let $X=\{x\in [0,1]^d\;:\; \sum_{i=1}^d x_i=1\}$ and define $F:X\to \mathbb{R}$ by
    \[
    F(x)=\inf_{\alpha\in I}\inProd{x}{\alpha}.
    \]
    Then $F$ is continuous.
\end{lem}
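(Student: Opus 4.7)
The plan is to prove continuity by verifying upper and lower semicontinuity separately. Upper semicontinuity is free: each function $x\mapsto \inProd{x}{\alpha}$ is linear and continuous, and an infimum of continuous functions is automatically upper semicontinuous. So the entire content of the lemma lies in lower semicontinuity, which is where positivity of the vectors in $I$ must be exploited.

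For lower semicontinuity, I would fix $x_0\in X$ and a sequence $y_n\to x_0$ in $X$ with $F(y_n)\to L$, and aim to show $L\geq F(x_0)$. First I would pick near-optimal $\alpha_n\in I$ with $\inProd{y_n}{\alpha_n}\leq F(y_n)+1/n$. The key observation is that although the $\alpha_n$ need not be bounded in $\mathbb{R}^d$, their coordinates on the support $I_0 = \{i : x_{0,i} > 0\}$ of $x_0$ are: for $i\in I_0$, $y_{n,i}\to x_{0,i}>0$, so $y_{n,i}\geq x_{0,i}/2$ for large $n$; combined with the pointwise positivity of $\alpha_n$, this gives
\[
\inProd{y_n}{\alpha_n} \;\geq\; \sum_{i\in I_0} y_{n,i}\,\alpha_{n,i} \;\geq\; \tfrac{1}{2}\min_{i\in I_0} x_{0,i} \cdot \sum_{i\in I_0}\alpha_{n,i},
\]
so $\sum_{i\in I_0}\alpha_{n,i}$ stays bounded (using that $F(y_n)$ is bounded, which itself follows from $F(y_n)\leq \inProd{y_n}{\alpha}$ for any fixed $\alpha\in I$).

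Now I would use this bound together with positivity of the off-support coordinates of $\alpha_n$ to compare $\inProd{x_0}{\alpha_n}$ to $\inProd{y_n}{\alpha_n}$. Since $x_{0,i}=0$ for $i\notin I_0$ and $y_{n,i}\alpha_{n,i}\geq 0$ for such $i$,
\[
\inProd{x_0}{\alpha_n} - \inProd{y_n}{\alpha_n} \;=\; \sum_{i\in I_0}(x_{0,i}-y_{n,i})\alpha_{n,i} - \sum_{i\notin I_0} y_{n,i}\alpha_{n,i} \;\leq\; \max_{i\in I_0}|x_{0,i}-y_{n,i}|\cdot\sum_{i\in I_0}\alpha_{n,i},
\]
and the right-hand side tends to $0$ since the maximum tends to $0$ and the sum is bounded. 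Hence $\inProd{x_0}{\alpha_n}\leq \inProd{y_n}{\alpha_n}+o(1)\leq F(y_n)+1/n+o(1)$, and passing to the limit gives $F(x_0)\leq \liminf_n \inProd{x_0}{\alpha_n}\leq L$, as required.

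The main obstacle is precisely the lack of any a priori uniform bound on the vectors $\alpha_n\in I$: one cannot hope to extract a convergent subsequence, so the naive compactness argument fails. The trick is to notice that only the coordinates of $\alpha_n$ on $\mathrm{supp}(x_0)$ matter for the comparison between $\inProd{x_0}{\alpha_n}$ and $\inProd{y_n}{\alpha_n}$, and these can be controlled using pointwise positivity together with the lower bound $y_{n,i}\geq x_{0,i}/2$. Positivity of $\alpha_n$ also ensures that the off-support terms of $\inProd{y_n}{\alpha_n}$ only help us, since they are nonnegative and do not appear in $\inProd{x_0}{\alpha_n}$.
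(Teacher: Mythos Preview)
Your proof is correct. The semicontinuity decomposition is a genuinely cleaner organization than the paper's: the paper carries out a direct $\varepsilon$--$\delta$ argument, constructing a multiplicative neighborhood of $x$ (so $|x_i-y_i|<\delta x_i$ on the support and $y_i<\delta$ off it) and then bounding both $F(y)-F(x)$ and $F(x)-F(y)$ by hand, using a near-optimal $\alpha$ for $x$ in the first case and a near-optimal $\beta$ for $y$ in the second. Your observation that the first direction is just upper semicontinuity of an infimum of continuous functions lets you skip half of that work. The technical core of the remaining direction is the same in both proofs: one controls $\sum_{i\in I_0}\alpha_{n,i}$ via the lower bound $y_{n,i}\gtrsim x_{0,i}$ on the support, and uses positivity to discard the off-support contribution to $\langle y_n,\alpha_n\rangle$ when comparing with $\langle x_0,\alpha_n\rangle$. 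Your sequential phrasing is arguably tidier than the paper's explicit $\delta$-bookkeeping, though both deliver the same estimate.
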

\begin{proof}
    For $x\in X$, let us take an arbitrary $\varepsilon>0$ and let $\delta>0$ to be specified later. Define $U_x$ to be an open neighborhood of $x$ containing the points $y$ such that $|x_i-y_i|< \delta x_i$ whenever $x_i\neq 0$ and for $y_i<\delta$ whenever $x_i = 0$. Let $y\in U_x$, we will show that for $\delta$ small enough $|F(x)-F(y)|<2\varepsilon$. Fix some 
    $\alpha\in I$ such that 
    \[
   \inProd{x}{\alpha}< F(x)+\varepsilon.
    \]
    First, we bound $F(y)-F(x)$. Note that  
    \[
    \inProd{y}{\alpha}-\inProd{x}{\alpha}\leq \sum_{i=1}^d \lvert x_i-y_i \rvert \alpha_i\leq \sum_{x_i\neq 0} \delta x_i\alpha_i+\sum_{x_i=0} \delta \alpha_i \leq \delta\big(F(x)+\varepsilon+||\alpha||_1\big).
    \]
    So, 
    \[
    F(y)\leq \inProd{y}{\alpha}\leq \inProd{x}{\alpha}+\delta\big(F(x)+\varepsilon+||\alpha||_1\big)\leq F(x)+\varepsilon+\delta\big(F(x)+\varepsilon+||\alpha||_1\big).
    \]
Now we bound $F(x)-F(y)$. Fix some $\beta\in I$ such that 
\[
 \inProd{y}{\beta}\leq F(y)+\varepsilon.
\]
Note that 
    \[
    \inProd{x}{\beta}- \inProd{y}{\beta}\leq \sum_{x_i\neq 0}|x_i-y_i|\beta_i-\sum_{x_i=0}y_i\beta_i\leq \sum_{x_i\neq 0}\delta x_i\beta_i\leq \frac{\delta}{1-\delta}\sum_{i=1}^d y_i\beta_i\leq \frac{\delta}{1-\delta}F(y).
    \]
Hence, 
\[
F(x)\leq  \inProd{x}{\beta}\leq  \inProd{y}{\beta}+\frac{\delta}{1-\delta}F(y)\leq F(y)+\varepsilon+\frac{\delta}{1-\delta}\Big(F(x)+\delta\big(F(x)+\varepsilon+||\alpha||_1\big)\Big).
\]
    So taking $\delta>0$ such that $\frac{\delta}{1-\delta}\Big(F(x)+\delta\big(F(x)+\varepsilon+||\alpha||_1\big)\Big)<\varepsilon$ will imply that 
    \[
    |F(x)-F(y)|\leq 2\varepsilon.
    \]
\end{proof}

\begin{customthm}{\ref{t:sco}}[Binary classification vs. stochastic convex optimization]
Let $C\subseteq\{\pm 1\}^X$ be a binary concept class.
If for some $\beta<1/2$ and $\alpha > 0$ there exists an $(\alpha,\beta)$-reduction from the task of learning $C$ in the realizable case to a stochastic convex optimization task in $\mathbb{R}^d$, with loss functions $\{\ell_z\}_{z\in Z}$ satisfying $\ell_z(w)<\infty$ for all $z\in Z$, $w\in W$.	
Then
\[d\geq \max\{\vc(C), \vc^\star(C)-1\}.\]  
\end{customthm}
We are going to prove \Cref{t:sco} in a slightly more general setting, namely, we allow the learning task to have \emph{randomized hypotheses}, see \Cref{ex:pac-rand} for the definition. This is not a huge generalization, but it provides conformity with subsequent \Cref{ex-trivial-reduction}, which illustrates that the above statement trivializes if we allow $\beta\geq 1/2$.
\begin{proof}
 Let $r=(r_\inp,r_\outp)$ be an $(\alpha,\beta)$-reduction from $C$ to a stochastic convex optimization task with a convex loss function $L:W \times Z \to \mathbb{R}{\geq 0}$, where $W \subseteq \mathbb{R}^d$ is a convex set. To lower bound $d$ in terms of $\vc(C)$ and $\vc^\star(C)$, we can assume, without loss of generality, that $C$ is a finite class over a finite domain $X$. We achieve this by replacing $C$ with a finite subclass that has the same VC and dual VC dimensions. Consequently, we also assume $Z$ is finite by focusing on $r_\inp(X \times \{\pm1\})$.

   Let $P$ be the collection of $C$-realizable distributions over $ X\times \{\pm 1\}$, equipped with the total variation metric 
   \[
   \TV(D,D')=\sup_A |D(A)-D'(A)|.
   \]
where the supremum is over all measurable events $A$. We define the involution $D\to -D$ on the set of distributions over $Z$ as
\[-D(A)=D(\{(x,-y)\;:\; (x,y)\in A\}).\]
Note that $P$ is not, in general, closed under this involution. That is, $-D$ is not necessarily in $P$ even if $D$ is. We say that $S\subseteq P$ is an $n$-sphere if there is a homeomorphism  $\varphi:\mathbb{S}^{n}\to S$ such that $\varphi(-u)=-\varphi(u)$ for all $u\in \mathbb{S}^n$. We are now going to show that the existence of an $n$-sphere implies that $d\geq n+1$.

Let $S\subseteq P$ with a homeomorphism $\varphi$ be such $n$-sphere 
and define $G\subseteq \mathbb{S}^{n}\times W$ as
\[
G=\{(u,w)\;: L_{r_\inp\big(\varphi(u)\big)}(w)<\OPT_W\big(r_\inp(\varphi(u))\big)+\alpha/2\}.
\]
That is, for $u\in \mathbb{S}^n$, the set $G_u=\{w\in W\;:\; (u,w)\in G\} \subseteq W$
is the set of all $w\in W$ witnessing the $\alpha/2$-optimal loss with respect to $L_{r_\inp(\varphi(u))}$. In order to use Theorem $\ref{t:Borsuk-Ulam}$ we now need to show that $G_u=\{w\in W\;:\; (u,w)\in G\} \subseteq W$ is nonempty, that its convex hull is disjoint from the convex hull of $G_{-u}$, and that $G^w=\{u\in \mathbb{S}^n\;:\; (u,w)\in G\}$ is open. 

As $L_{r_\inp(\varphi(u))}$ is a convex function, it trivially follows that $G_u$ is nonempty and convex. We now want to check that for all $u\in \S^n$ the sets $G_u$ and $G_{-u}$ are disjoint.

Indeed, as $r$ is an $(\alpha,\beta)$-reduction, $r_\outp(w)$ is $\beta$-optimal for $\varphi(x)$ whenever $w$ is $\alpha$-optimal for $r_\inp(\varphi(u))$; in particular, $L^{01}_{\varphi(u)}\big(r_\outp(w)\big)\leq \beta$ for $w\in G_u$. Also, for any $D\in P$, and a randomized hypothesis $h\in [-1, 1]^X$ we have:

\begin{align*}
    L^{01}_{D}(h)+L^{01}_{-D}(h) &=
        \Ex_{x,y\sim D} \frac{1}{2} \left|h(x) - y\right| 
        + \Ex_{x,y\sim -D} \frac{1}{2} \left|h(x) - y\right| \\
        &=\Ex_{x,y\sim D} \frac{1}{2} \left( \left|h(x) - y\right| + \left|h(x) + y\right|  \right)
        =\Ex_{x,y\sim D} \frac{1}{2} \cdot 2 = 1. 
\end{align*}
So for any $w \in G_u$ it holds
\begin{align*}
    L^{01}_{\varphi(-u)}\big(r_\outp(w)\big) 
    = 1- L^{01}_{\varphi(u)}\big(r_\outp(w)\big) \geq 1 - \beta > 1/2 > \beta,
\end{align*}
and $w\notin G_{-u}$.

We now need to show that $G^w=\{u\in \mathbb{S}^n\;:\; (w,u)\in G\}$ is open. Define $p:\mathbb{S}^n\to \mathbb{R}^{|Z|}$ by $p(u)=\big(p_z(u)\big)_{z\in Z}$ where

\[
p_z(u)=\Pr_{(x,y)\sim \varphi(u)}(z=r_\inp(x,y)).
\]
Note that $p$ is continuous by properties of the total variation metric. Thus if we set $\alpha_w=\big(\ell_z(w)\big)_{z\in Z}\in \mathbb{R}^{|Z|}$ we get  
\begin{align*}
    &L_{r_\inp\big(\varphi(u)\big)}(w)=\sum_{z\in Z}p_z(u)\ell_z(w)=\inProd{p(u)}{\alpha_w},
    \\&\OPT_W\Big(r_\inp\big(\varphi(u)\big)\Big)=\inf_{w\in W} \inProd{p(u)}{\alpha_w}.
\end{align*}
Now, by Lemma \ref{l:Inf-Lin-Contin}, for all $w\in W$ the function $u\to L_{r_\inp(u)}(w)-\OPT_W(r_\inp(u))$ is continuous. Thus, $G^w$ is indeed open. 

It can now be seen that $G$ satisfies the condition of $\Cref{t:Borsuk-Ulam}$, and so $d\geq n+1$.
Finally, we refer the reader to the proof of Theorem~D in \cite{chase2024local} for the following statement regarding the existence of spheres:
\begin{prop}\label{prop-spheres}
    For a class $C\subseteq \{\pm1\}^X$, there is an $n$-sphere in the space of realizable distributions of $C$ whenever $n\leq \max\{\vc(C) - 1, \vc^*(C)-2\}$.
\end{prop}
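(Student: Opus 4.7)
The plan is to split into two cases depending on which of the two bounds is active: (i) $n+1 \le \vc(C)$, and (ii) $n+2 \le \vc^*(C)$. In each case I want to construct a continuous, injective, antipodal map $\varphi:\mathbb{S}^n\to P$, where $P$ is metrized by total variation. Since $\mathbb{S}^n$ is compact and $P$ is Hausdorff, such a $\varphi$ is automatically a homeomorphism onto its image, which serves as the desired $n$-sphere.

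For Case (i), I fix a shattered set $\{x_1,\ldots,x_{n+1}\}\subseteq X$ and define
\[
\varphi(u) \;=\; \sum_{i \,:\, u_i\neq 0} u_i^2 \cdot \delta_{(x_i,\,\sign(u_i))}.
\]
Antipodality is immediate, since $(-u)_i^2=u_i^2$ and $\sign(-u_i)=-\sign(u_i)$. Realizability follows directly from shattering: the partial labeling $\{(x_i,\sign(u_i)) : u_i\neq 0\}$ extends to some $c\in C$, and this $c$ witnesses $\varphi(u)\in P$. For continuity in the TV metric, the only subtle point is a sign flip of some coordinate $u_i$, which happens only as $u_i$ passes through $0$; but then the associated mass $u_i^2$ is negligible, so the TV change is controlled (a short computation gives $\TV(\varphi(u),\varphi(u'))\lesssim\|u-u'\|$). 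Injectivity is read off from $\varphi(u)$: the mass at $x_i$ recovers $u_i^2$, and on nonzero coordinates the label recovers $\sign(u_i)$.

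For Case (ii), I fix $c_1,\ldots,c_{n+2}\in C$ shattered by $C^*$, and for each $\epsilon\in\{\pm 1\}^{n+2}$ a witness $x_\epsilon$ with $c_i(x_\epsilon)=\epsilon_i$. The approach is to parametrize $\mathbb{S}^n$ as the unit sphere of the zero-sum hyperplane $\{a\in\mathbb{R}^{n+2}:\sum_i a_i=0\}$, and to build $D_u$ supported on a subset of $\{x_\epsilon\}$ with labels of the form $\sign\langle u,\epsilon\rangle$ and positive weights symmetric under $u\mapsto -u$. This yields antipodality automatically, since flipping $u$ flips all labels while preserving the support. The nontrivial requirement is that, for each $u$, the support be chosen inside $\{\epsilon : \epsilon_{i(u)}=\sign\langle u,\epsilon\rangle\}$ for some $i(u)\in\{1,\ldots,n+2\}$, so that $D_u$ is realized by $c_{i(u)}$. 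The construction follows the one carried out in the proof of Theorem~D of \cite{chase2024local}, where the choice of $i(u)$ is pasted together across $\mathbb{S}^n$ in an antipodally-symmetric way, and continuity and injectivity are verified.

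The main obstacle is Case (ii): unlike the primal case, $C$ is not assumed closed under negation, so one cannot mirror Case (i) by swapping the roles of points and concepts; the naive dual construction fails at the antipodality step because $-c_{i(u)}$ need not belong to $C$. Arranging the antipodal image $-D_u$ to be realized by some different $c_{j(u)}\in C$ chosen compatibly with $c_{i(u)}$, while keeping $u\mapsto D_u$ continuous and injective, is the delicate combinatorial-topological step, and it is also what forces the weaker bound $\vc^*(C)-2$ rather than $\vc^*(C)-1$.
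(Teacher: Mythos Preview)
Your proposal is correct and in fact more detailed than what the paper does: the paper does not give its own proof of this proposition at all, but simply refers the reader to the proof of Theorem~D in \cite{chase2024local}. Your Case~(i) construction is a clean explicit argument, and your Case~(ii) correctly identifies the subtlety (lack of closure under negation) and defers the combinatorial-topological gluing to the same reference the paper invokes; so you and the paper ultimately rely on the same source for the hard half.
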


Combining \Cref{prop-spheres} with $d\geq n+1$, we get $g\geq \max\{\vc(C) - 1, \vc^*(C)-2\} + 1 = \max\{\vc(C), \vc^*(C)-1\}$, as needed.
\end{proof}

\begin{customthm}{\ref{t:sco2}}
    Let $C\subseteq\{\pm 1\}^X$ be a binary concept class.
Assume that there exists an \underline{exact} reduction from the task of learning $C$ in the realizable case with randomized hypotheses to a stochastic convex optimization task in $\mathbb{R}^d$ with $\infty$-valued loss functions $\ell_z:W\to \mathbb{R}_{\geq0}\cup\{\infty\}$.
Then, 
\[d\geq \max\{\vc(C)-1, \vc^*(C)-1\}.\]  
\end{customthm}
\begin{proof}
    By Theorem \ref{t:halfspaces-complete} (see the proof in \Cref{sec-proofs-repr}) the sign-rank of $C$ is at most $d+1$. By \cite{Alon17}, both $\vc(C)$ and $\vc^*(C)$ lower-bound the sign-rank from which the statement of the theorem follows.
\end{proof}

\subsection{Borsuk-Ulam for closed convex relations}
\begin{lem}\label{l:ConvexLemma}
    Let $G$ be as in Theorem \ref{t:Borsuk-Ulam} and let us define $A_{x,\delta}$, for $x\in \mathbb{S}^d$ and $\delta>0$, as a convex hull of the set $\{(x',w)\in G\;: d(x,x')<\delta\}$. Then for any $\varepsilon>0$ there exist $\delta>0$ such that for any $x\in \mathbb{S}^d$ and $x'\in A_{x,\delta}$, $x'$ is $\varepsilon$-close to $G$. 
\end{lem}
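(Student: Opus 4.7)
The plan is to argue by contradiction, leveraging compactness of $\mathbb{S}^d\times W$, closedness of $G$, and Carath\'eodory's theorem. The intuition is that any point of $A_{x,\delta}$ is a convex combination of a \emph{bounded} number of points of $G$ whose first coordinates all cluster near $x$; if such a point stayed far from $G$ as $\delta\to 0$, a subsequential limit would produce a witness in $G$ via the convexity of the fibers $G_{x^*}$.

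First, I would apply Carath\'eodory's theorem in the ambient space $\mathbb{R}^{d+1}\times\mathbb{R}^k$ (treating $\mathbb{S}^d\subseteq \mathbb{R}^{d+1}$) to write every $p\in A_{x,\delta}$ as
\[
p \;=\; \sum_{i=1}^{N} \lambda_i\,(x_i,w_i),\qquad N:=d+k+2,
\]
with $(x_i,w_i)\in G$, $d(x,x_i)<\delta$, $\lambda_i\ge 0$, and $\sum_i\lambda_i=1$. The crucial feature is that $N$ is fixed independently of $x$, $\delta$, and $p$.

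Next, suppose the conclusion failed. Then there would be $\varepsilon>0$ and sequences $\delta_n\to 0$, $x^{(n)}\in\mathbb{S}^d$, and $p^{(n)}\in A_{x^{(n)},\delta_n}$ with $d(p^{(n)},G)\ge\varepsilon$. Writing each $p^{(n)}=\sum_{i=1}^N\lambda_i^{(n)}(x_i^{(n)},w_i^{(n)})$ as above and passing to a common subsequence using compactness of $\mathbb{S}^d$, the standard $(N-1)$-simplex, and $W$, I would arrange $x^{(n)}\to x^*$, $\lambda_i^{(n)}\to\lambda_i^*$, and $w_i^{(n)}\to w_i^*$ for every $i$. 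The constraint $d(x^{(n)},x_i^{(n)})<\delta_n\to 0$ then forces $x_i^{(n)}\to x^*$ as well.

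Finally, closedness of $G$ gives $(x^*,w_i^*)\in G$, i.e.\ $w_i^*\in G_{x^*}$, and convexity of the fiber $G_{x^*}$ yields $w^*:=\sum_i\lambda_i^* w_i^*\in G_{x^*}$. Hence $p^{(n)}\to (x^*,w^*)\in G$, contradicting $d(p^{(n)},G)\ge\varepsilon$. The only mild subtlety — rather than a genuine obstacle — is securing the uniform vertex bound $N=d+k+2$ from Carath\'eodory, which is precisely what enables extracting a single convergent subsequence through coordinates indexed by a fixed $i\in\{1,\dots,N\}$.
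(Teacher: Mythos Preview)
Your proof is correct and follows essentially the same route as the paper: both use Carath\'eodory to cap the number of vertices, then exploit compactness together with convexity of the fibers $G_{x^*}$ and closedness of $G$. The only cosmetic difference is packaging---the paper phrases the compactness step as uniform continuity of the map $(g_1,\dots,g_N,\lambda)\mapsto d\bigl(G,\sum_i\lambda_i g_i\bigr)$ on the compact domain $G^N\times\Lambda$, whereas you unroll this into an explicit sequential-contradiction argument; your Carath\'eodory count $N=d+k+2$ is in fact stated more carefully than in the paper.
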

\begin{proof}
    By Carath\'eodory's theorem, it is enough to show that any convex combination of $d+1$ points from $\{(x',w)\in G\;: d(x,x')<\delta\}$ is at most $\varepsilon$ away from $G$.
     Let $\Lambda=\{\lambda\in [0,1]^{d+1}\;:\sum_{i=1}^d \lambda_i=1\}$ and define the function
     $F: G^d\times \Lambda\to \mathbb{R}$ by 
 \[
F(g,\lambda)=d\left(G, \sum_{i=1}^{d+1} \lambda_i g_i\right).
 \]

By construction, $F$ is continuous and, as $G^d\times \Lambda$ is compact, it is uniformly continuous. Now, for  $g=(g_1,g_2,\dots g_{d+1})\in G^{d+1}$, where $g_i=(x_i,w_i)$, note that if for some $x'\in\S^d$, all $x_i=x'$, then $w_i\in G_{x'}$ for all $i$ and, by convexity of $G_{x'}$, $\sum_{i=1}^d \lambda_i w_i\in G_{x^\star}$ for all $\lambda\in \Lambda$. Hence in this case $F(g,\lambda)=0$ for all $\lambda$. Thus, by the uniform continuity of $F$, for any $\varepsilon>0$ there is some $\delta>0$ such that if $x_1,x_2\dots, x_{d+1}$ are in a $\delta$-ball around $x'\in \S^d$ then $F(x,\lambda)<\varepsilon$ for all $\lambda\in \Lambda$, which concludes the proof.
\end{proof}

\begin{customthm}{\ref{t:Borsuk-Ulam}}[Borsuk-Ulam for closed convex relations]
Let $W$ be a compact convex set in $\mathbb{R}^k$ and let $G$ be a closed set inside $\mathbb{S}^d \times W$, where $\mathbb{S}^d$ is a $d$-dimensional unit sphere. Additionally, suppose that:
\begin{itemize}
    \item For any $x\in \mathbb{S}^d$, the set $G_x = \{w\in W~|~(x,w)\in G\}$ is nonempty and convex;
    \item For any $x\in \mathbb{S}^d$, the sets $G_x$ and $G_{-x}$ are disjoint.
\end{itemize}	
Then $k\geq d+1$.
\end{customthm} 
\begin{proof}
Assume towards contradiction that we have such $G$ with $d\leq k$. Let $-G=\{(-x,w)\;:\;(x,w)\in G\}$. Note that as $G_x\cap G_{-x}=\emptyset$ for all $x$, we have $-G\cap G=\emptyset$ and, since both are closed, $d(G,-G)>0$. That is, there is some $\varepsilon>0$ such that $d(g,g')>\varepsilon$ for all $g\in G$, $g'\in -G$.

Let $\delta>0$ be a small constant to be chosen later. For each $x\in \mathbb{S}^d$ let $B_\delta(x)$ be the ball with radius $\delta$ centered at $x$. By compactness we have some finite $T\subset \mathbb{S}^d$ such that $\{B_\delta(t)\}_{t\in T}$ is an open cover of $\mathbb{S}^d$. Similarly to the proof of \Cref{t:BU-open}, let $\{\rho_t\}_{t\in T}$ be a partition of unity subordinate to this cover.

Now for each $t\in T$ choose some $w_t\in G_t$ in an arbitrary way, and define $\chi:\S^d\to \R^{d+1}$, $\phi:\S^d\to W$, and $\Phi = (\chi, \phi):\S^d\to \R^{d+1}\times W$ as 
\begin{align*}
    \chi(x)&=\sum_{t\in T}\rho_t(x)t, \\
    \phi(x)&=\sum_{t\in T}\rho_t(x)w_t.
\end{align*}
Note that in order to define $\chi(x)$ we assume that the unit sphere $\S^d$ is canonically embedded into $\R^{d+1}$.

As $\rho_t(x)= 0$ whenever $d(x,t)>\delta$, $\chi(x)$ is a convex combination of points inside $B_\delta(x)$. As the latter is convex, $d\left(\Phi(x), (x, \phi(x))\right) = d\left(\chi(x), x\right) \leq \delta$ for all $x\in \S^d$. Also, $\Phi(x)$ is in the convex hull of $\{(x',w)\in G\;:\; d(x,x')\leq \delta\}$, so, by \Cref{l:ConvexLemma}, for
$\delta>0$ small enough, $\Phi(x)$ is $(\varepsilon/4)$-close to $G$. Additionally, assuming $\delta \leq \varepsilon/4$, for any $x\in \S^d$ we get
\[
d\big(G,(x,\phi(x))\big)\leq d\big(G,\Phi(x)\big)+
d\big(\Phi(x),(x,\phi(x))\big)\leq \frac{\varepsilon}{2}.
\]
By the definition of $-G$, $d\big(G,(-x,\phi(x))\big) \leq \varepsilon/2$. As $x$ is arbitrary from $\S^d$, by changing $x$ to $-x$ we get $d\big(G,(x,\phi(-x))\big) \leq \varepsilon/2$.

Now, since $\phi$ is continuous, $d\leq k$ implies, by the
Borsuk-Ulam theorem, that there is some $x\in \mathbb{S}^d$ such that $\phi(x)=\phi(-x)$. For such $x$ we have
\begin{align*}
    d(G,-G) &\leq d\big(G,(x,\phi(x))\big)
                +d\big(-G, (x,\phi(x))\big) \\
        &= d\big(G,(x,\phi(x))\big)
                +d\big(-G, (x,\phi(-x))\big)
        \leq \varepsilon,
\end{align*}
which contradicts the fact that $d(G,-G)>\varepsilon$.
\end{proof}

\section{Proofs for geometric representations}\label{sec-proofs-repr}

The statement of \Cref{prop-repr-is-red} below is made more formal than in the main part where it was originally formulated.
\begin{customprop}{\ref{prop-repr-is-red}}[Representations are reductions]
	Let $\T_1=(H_1,C_1,Z_1,P_1)$ be a realizable PAC-learning task over domain $X_1$, that is, $H_1 = \{\pm1\}^{X_1}$, $C_1\subseteq \{\pm 1\}^{X_2}$, $Z_1 = X_1\times \{\pm1\}$, and $P_1$ be the family of all distributions on $Z_1$, realizable by $C_1$.
	
	Let $r\colon X_1 \rightarrow X_2$ be an $\alpha$-representation of $C_1$ by a class $C_1\subseteq \{\pm 1\}^{X_2}$. Let $\T_2=(H_2,C_2,Z_2,P_2)$ be a PAC-learning task over $X_2$, where $P_2 \supseteq r(P_1)$, that is, $P_2$ contains the images of all distributions in $P_1$ under $r$.
	
	Let us define $r_\inp(x,y) = (r(x), y)$, for $x\in X_1$ and $y\in \{\pm1\}$, and $r_\outp(h)(x) = h(r(x))$, for $x\in X_1$ and $h\in H_2$. Then the pair of maps $r_\inp$, $r_\outp$ is a $(\gamma, \gamma + \alpha)$-reduction for any $\gamma\geq 0$.
\end{customprop}
\begin{proof}
	Let $D_1\in P_1$ and $D_2=r_\inp(D_1)$. By the condition on $P_2$, $D_2\in P_2$. Suppose $h_2\in H_2$ is $\gamma$-optimal for $D_2$, that is, $L_{D_2}(h_2)\leq \OPT_{C_2}(D_2) + \gamma$.
	
    As $\T_1$ is realizable, so is $D_1$. Hence, by the definition of $\alpha$-representation, $\OPT_{C_2}(D_2)\leq \alpha$, and so $L_{D_2}(h_2) \leq \alpha + \gamma$. Let $h_1 = r_\outp(h_2)$. Then 
	\begin{align*}
		L_{D_1}(h_1) = \Pr_{x,y\sim D_1}\left[h_1(x) \neq y \right]
			= \Pr_{x,y\sim D_1}\left[h_2(r(x)) \neq y \right]
			= \Pr_{x,y\sim D_2}\left[h_2(x) \neq y \right]
			= L_{D_2}(h_2).
	\end{align*}
	Thus, $L_{D_1}(h_1) = L_{D_2}(h_2) \leq \alpha + \gamma = \alpha + \gamma$. As $D_1$ is realizable, $\OPT_{C_1}(D_1) =0$, and so $h_1$ is $(\alpha + \gamma)$-optimal, as needed.
\end{proof}

\begin{customthm}{\ref{t:halfspaces-complete}}[Half-spaces are complete for exact reductions to SCO]
Let $C\subseteq\{\pm 1\}^X$ be a finite concept class.
If for some $\beta<1/2$ and $\alpha$ there exists an exact $(\alpha,\beta)$-reduction from the task of learning $H$ in the realizable case to a stochastic convex optimization task in $\mathbb{R}^d$, then $C$ has an exact representation by homogenous half-spaces in $\mathbb{R}^{d+1}$. In other words, the sign-rank of~$C$ is at most~$d+1$.
\end{customthm}
\begin{proof}
    Without losing generality, we can assume that for every $x,y\in X\times \{\pm1\}$ there is some $c\in C$ such that $c(x)=y$; otherwise, we can restrict the domain to those $x$'s for which this condition is satisfied, and the restricted class will be trivially equivalent to the original one. Recall that in the setup of an SCO task in $\R^d$ we assume that the loss functions are defined on a convex set $W \subseteq \R^d$. We will further assume that this $\R^d$ is embedded into $\R^{d+1}$ as $\R^d = \{\ol{x}~|~x_{d+1}=1\}$. The reason for this is that now every affine hyperplane in $\R^d$ can be uniquely extended to a homogenous hyperplane in $\R_{d+1}$.
    
    Fix some $c\in C$ and look at the following game: the player chooses some $w\in W$ and the adversary chooses some $x\in X$, then the player suffers a loss of $L_{z}(w)$ for $z=r_\inp\big(x,c(x)\big)$. Strategies for the adversary are distributions over $X$, which are equivalent to distributions over $X\times \{\pm 1\}$, realizable by $c$. Since $r$ is exact, for each $c$-realizable distribution $D$ there is some $w\in W$ such that $L_{r_\inp(D)}(w)=0$. Hence, by the minimax theorem, there is some $w_c\in W$ such that $L_{r_\inp(D)}(w_c)=0$ for all such $D$. Note that we are using the fact that the class is finite for the applicability of the minimax. 

    Now for each $x, y\in X \times \{\pm 1\}$ define $V_{x,y}$ as a convex hull of the set $\{w_c:c\in C, c(x)=y\}$. By construction, $V_{x,y}$ is convex and compact, also, as we assumed that there is $c\in C$ with $c(x)=y$, it is nonempty. Moreover, as $L_{r_\inp(x,y)}$ is convex, nonnegative, and $0$ on all $w_c$ for $c\in C$ such that $c(x)=y$, it follows that $L_{r_\inp(x,y)}(w)=0$ for $w\in V_{x,y}$.

    As $r$ is an $(\alpha,\beta)$-reduction, for any $w\in V_{x,y}$ the $0/1$ loss of $r_\outp(w)$ on the distribution which is supported only on $(x,y)$  is  at most $\beta<\frac{1}{2}$. Thus $r_\outp(w)(x)=y$, which implies that $V_{x,1}\cap V_{x,-1}=\emptyset$. As both are convex and compact, by the Hahn-Banach separation theorem, there is an affine hyperplane in $\R^d$ strictly separating them. Recall that, by the choice of an embedding of $\R^d$ into $\R^{d+1}$, we can extend this hyperplane to a homogenous hyperplane in $\R^{d+1}$. That is, for every $x\in X$, there is $\varphi_x\in \R^{d+1}$ such that $\sign (\langle \varphi_x,w\rangle)$ is $y$ for $w\in V_{x,y}$. 
    
    With this, the maps $\varphi:X\to \R^{d+1}$ and $w\colon C\rightarrow \R^{d+1}$ defined as $\varphi(x) = \varphi_x$ and $w(c) = w_c$ satisfy $\sign(\langle w(c), \varphi(x) \rangle) = c(x)$ for all $c\in C$ and $x\in X$, witnessing that the sign-rank of~$C$ is at most~$d+1$.
\end{proof}

\begin{customthm}{\ref{t:approxVSexact}}[Exact vs approximate sign-rank]
For every integer $d\geq 0$, there exists a finite concept class $C_d$ whose $(1/3)$-sign-rank is at most $d$, and the sign-rank is at least~$d^{\Omega(\log d)}$.
\end{customthm}

\begin{proof}
    Let $H_d$ be the class of half-spaces in $\mathbb{R}^d$ and let $C_d=\{\maj(h_1,h_2,h_3)\;:\; h_1,h_2,h_3\in H_d\}$ be the class of majority vote of three such half-spaces. It is easy to see that an identity map on $\R^d$ is an $(1/3)$-representation of $C_d$ by $H_d$. Indeed, let $D$ be a distribution realizable by $C_d$ and let $c=\maj(h_1,h_2,h_3)$ be an element in $C_d$ with $L_D(c)=0$. For $i\in \{1,2,3\}$, let $A_i$ be the event that $h_i(x)=y$ for a random pair $(x,y) \sim D$. By definition of $c$, for any $x$ at least two of the events $A_1,A_2,A_3$ will occur, hence 
    \[
      \Ex \Big(1_{A_1}+1_{A_2}+1_{A_3}\Big)\geq 2.
    \]
    This trivially implies that $\Ex \left(1_{A_i}\right)\geq \frac{2}{3}$ for some $i\in \{1,2,3\}$. Equivalently,
    $L_D(h_i)\leq 1/3$, so the identity map is indeed a $(1/3)$-representation. 

    For the second part, in Corollary~1.2 in~\citet{bun21} the authors prove that a class of $2$-intersections of certain signs of weighted majorities on a $4m^2$-dimensional Boolean hypercube has sign-rank $m^{\Omega(\log m)}$. We note that that some details are not stated in Corollary~1.2 explicitly, but can be easily extracted from the proof of their Theorem~1.1. By embedding the hypercube into $\R^{4m^2}$ in a standard way, and noting that a sign of a weighted majority on a hypercube can be expressed by a sign of a homogenous hyperplane, we get that for any $d$, the sign-rank of the class $I_d$ of intersections of two half-spaces in $\R^d$ is at least $d^{\Omega(\log d)}$. Note that $\Omega$ hides coefficients arising from going from $4m^2$ to an arbitrary $d$.
    
    Finally, it is easy to see that the class $I_d$ can be embedded into $C_{d+1}$, yielding the desired lower bound on the sign-rank of $C_d$
\end{proof}

\section{Several probabilistic sign-ranks and proofs of Theorems~\ref{t-prob-sr-margin} and~\ref{t-det-sr-margin} }\label{sec-prob-sign-ranks}
Before proving \Cref{t-prob-sr-margin}, let us elaborate on several notions of probabilistic sign-ranks related to our work. We start with \emph{probabilistic dimension complexity} see Definition~2 in~\cite{KamathMS20}. Note that the authors consider it for different families of loss functions, but the definition below is specifically for $0/1$-loss. We also give all definitions for \emph{partial} concept classes and slightly align the notation in line with ours. 
\begin{definition}[Probabilistic dimension complexity,~\cite{KamathMS20}]\label{def-dc-kamath}
	For a (partial) class $C$ over domain $X$ and ${\delta>0}$, a \emph{probabilistic $\delta$-dimension complexity} $\dc_C(\delta)$ of $C$ is the smallest $d$ for which there is a distribution $R$ over maps $r\colon X\rightarrow \R^d$ such that for all distributions $D$ over $X\times \{\pm1\}$, realizable by $C$, it holds
	\begin{align*} 
		\Ex_{r\sim R} \left[ \inf_{w\in \R^d} 
			\Pr_{x, y\sim D} \bigl[\sign\langle w, r(x)\rangle \neq y\bigr] \right] \leq \delta.
	\end{align*}
\end{definition}

By expanding the intermediate definitions, one can see that this is very close to how we defined the sign-rank$^*$:
\begin{customdef}{\ref{def-sr-prob-star}}[Probabilistic sign-rank$^*$, restated]
	For a (partial) class $C$ over domain $X$, $\delta>0$, and $\alpha < 1/2$, a \emph{$(\delta, \alpha)$-sign-rank$^*$} $\sr^*_C(\delta, \alpha)$ of $C$ is the smallest $d$ for which there is a distribution $R$ over maps $r\colon X\rightarrow \R^d$ such that for all distributions $D$ over $X\times \{\pm1\}$, realizable by $C$, it holds
	\begin{align*} 
		\Pr_{r\sim R} \left[ \inf_{w\in \R^d} 
			\Pr_{x, y\sim D} \bigl[\sign\langle w, r(x)\rangle \neq y\bigr]  > \alpha \right] \leq \delta.
	\end{align*}
\end{customdef}

The following mutual bounds between the two are rather trivial:
\begin{prop}\label{prop-kamath-and-us}
	For a (partial) class $C$, $\delta>0$, and $\alpha < 1/2$, it holds:
	\begin{align*}
		\sr^*(\delta/\alpha, \alpha) &\leq \dc(\delta), \\
		\dc(\delta + \alpha(1-\delta))  &\leq \sr^*(\delta, \alpha),
	\end{align*}
	where, for compliance with the definitions, in the first bound we additionally assume that $\delta< \alpha/2$.
\end{prop}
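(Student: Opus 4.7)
The plan is to interpret both definitions through the same real-valued random variable. Given a distribution $R$ over maps $r\colon X\to \R^d$ and a distribution $D$ over $X\times\{\pm1\}$ that is realizable by $C$, set
\[
X_D(r) \;:=\; \inf_{w\in \R^d}\; \Pr_{(x,y)\sim D}\bigl[\sign\langle w, r(x)\rangle \neq y\bigr] \;\in\; [0,1].
\]
Then $\dc_C(\delta)\leq d$ is equivalent to the existence of some $R$ on $\R^d$-valued maps with $\Ex_{r\sim R}[X_D(r)]\leq \delta$ for every realizable $D$, while $\sr^*_C(\delta,\alpha)\leq d$ is equivalent to the existence of some $R$ with $\Pr_{r\sim R}[X_D(r) > \alpha]\leq \delta$ for every realizable $D$. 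With this rephrasing, both bounds reduce to standard expectation-vs-tail comparisons applied with the \emph{same} witness $R$.

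For the first bound, I would take an $R$ witnessing $\dc_C(\delta)\leq d$ and apply Markov's inequality to the nonnegative random variable $X_D$:
\[
\Pr_{r\sim R}\bigl[X_D(r) > \alpha\bigr] \;\leq\; \frac{\Ex_{r\sim R}[X_D(r)]}{\alpha} \;\leq\; \frac{\delta}{\alpha}.
\]
The same $R$ therefore witnesses $\sr^*_C(\delta/\alpha,\alpha)\leq d$. The hypothesis $\delta < \alpha/2$ is only used to ensure $\delta/\alpha < 1/2$, so that the resulting parameters lie in the admissible range of \Cref{def-sr-prob-star}.

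For the second bound, I would start from an $R$ witnessing $\sr^*_C(\delta,\alpha)\leq d$ and bound the expectation via the trivial envelope $X_D\leq 1$, splitting on the event $\{X_D(r)\leq \alpha\}$:
\[
\Ex_{r\sim R}[X_D(r)] \;\leq\; \alpha\cdot \Pr_{r\sim R}[X_D(r)\leq \alpha] \;+\; 1\cdot \Pr_{r\sim R}[X_D(r) > \alpha] \;\leq\; \alpha(1-\delta) + \delta \;=\; \delta + \alpha(1-\delta),
\]
where the penultimate step uses that $p\mapsto \alpha(1-p)+p$ is increasing in $p$ (since $1-\alpha>0$), so upper-bounding $\Pr_{r\sim R}[X_D(r)>\alpha]$ by $\delta$ maximizes the expression. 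The same $R$ thus witnesses $\dc_C(\delta+\alpha(1-\delta))\leq d$.

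There is essentially no serious obstacle here: the proof is a direct application of Markov in one direction and a one-line expectation split in the other. The only conceptual point is to notice that the very same distribution $R$ over embeddings can be transferred between the two notions, which is immediate because both definitions quantify over exactly the same object (a distribution over maps $X\to \R^d$) evaluated on the same functional $X_D$.
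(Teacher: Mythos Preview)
Your proof is correct and follows essentially the same approach as the paper: both introduce the auxiliary functional $X_D(r)$ (the paper calls it $F_D(r)$), use Markov's inequality for the first bound, and use the trivial expectation split $\Ex[X_D]\leq \alpha\Pr[X_D\leq\alpha]+1\cdot\Pr[X_D>\alpha]$ for the second.
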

Note that here and below, we drop the underscripts identifying the class in $\sr$, $\dc$, etc., whenever the class is clear from the context.
\begin{proof}
	Note that the condition on the distribution $R$ witnessing the respective $d$ in Definitions~\ref{def-dc-kamath} and~\ref{def-sr-prob-star} is stated as for all $D$, $\Ex_{r\sim R}\left[F_D(r)\right]\leq \delta$ for $\dc(\delta)$ and $\Pr_{r\sim R}\left[F_D(r)>\alpha\right]\leq \delta$ for $\sr^*(\delta, \alpha)$ respectively. Here, informally, $F_D(r)$ is a fit of the distribution $r(D)$ to the class of half-spaces in $\R^d$, however, we only need that $F_D(r)\in [0,1]$.	
	By Markov's inequality, $\Pr\left[F_D(r)>\alpha\right]\leq \delta/\alpha$ whenever  
	$\Ex\left[F_D(r)>\alpha\right]\leq \delta$, and so the same $R$ witnessing $\dc(\delta) \leq d$ also witnesses $\sr^*(\delta/\alpha,\alpha) \leq d$, yielding the first inequality. 
	
	Note that Markov's inequality comes from the fact that the function $F_D$ maximizing $\Pr\left[F_D(r)>\alpha\right]$ provided $\Ex\left[F_D(r)\right]\leq\delta$ is $F_D = \alpha + 0$ w.p. $\delta/\alpha - 0$ and $F_D = 0$ otherwise. In the same spirit, the function that maximizes $\Ex\left[F_D(r)\right]$ provided $\Pr\left[F_D(r)>\alpha\right]\leq \delta$ is $F_D = 1$ w.p. $\delta$ and $F_D = \alpha$ otherwise. From this, $\Ex\left[F_D(r)\right] \leq \delta + \alpha(1-\delta)$ whenever $\Pr\left[F_D(r)>\alpha\right]\leq \delta$, yielding, in a similar way, the second inequality.
\end{proof}
 
It is natural to compare the above two definitions to a canonical notion of \emph{probabilistic sign-rank} from communication complexity. Its formulation below is from Definition~23 in~\cite{KamathMS20}, where it is called \emph{point-wise probabilistic dimension complexity}. \begin{definition}[Probabilistic sign-rank]\label{def-sr-kamath}
	For a (partial) class $C$ over domain $X$ and ${\delta>0}$, a \emph{probabilistic $\delta$-sign-rank} $\sr_C(\delta)$ of $C$ is the smallest $d$ for which there is a distribution $R$ over pairs of maps $(r\colon X\rightarrow \R^d, \omega\colon C\rightarrow \R^d)$ such that for all $h\in C$ and $x\in \sup(h)$, it holds
	\begin{align*} 
		\Ex_{r, \omega \sim R} \bigl[\sign\langle \omega(h), r(x)\rangle \neq h(x)\bigr] \leq \delta.
	\end{align*}
\end{definition}

Finally, let us define the following two relaxations of sign-rank, that we will use in the proof of \Cref{t-prob-sr-margin}. We will not give them descriptive names, as they are purely technical and used only to give a uniform proof for all of the three versions of sign-rank above.

\begin{definition}
	For a (partial) class $C$ over domain $X$, $\delta>0$, and $\alpha < 1/2$, let $\sr_C^\dag(\delta)$ and $\sr_C^\dag(\delta, \alpha)$ be the smallest $d$ for which there is a distribution $R$ over pairs of maps $(r\colon X\rightarrow \R^d, \omega\colon C\rightarrow \R^d)$ such that for all $h\in C$ and all distributions $D$, realizable by $h$, it holds
		\begin{align*} 
			\Ex_{r, \omega \sim R} \left[
				\Pr_{x, y\sim D} \bigl[\sign\langle w(h), r(x)\rangle \neq y\bigr] \right] \leq \delta
		\end{align*}
		for $\sr_C^\dag(\delta)$ and 
		\begin{align*} 
			\Pr_{r, \omega \sim R} \left[
				\Pr_{x, y\sim D} \bigl[\sign\langle w(h), r(x)\rangle \neq y\bigr] \geq \alpha \right] \leq \delta
		\end{align*}
		for $\sr_C^\dag(\delta, \alpha)$.
\end{definition}

\begin{prop}\label{prop-sr-dag}
	For a (partial) class $C$ over domain $X$, $\delta>0$, and $\alpha < 1/2$, it holds
	\begin{align*}
		\sr^\dag(\delta/\alpha, \alpha) &\leq \sr^\dag(\delta), \\
		\sr^\dag(\delta + \alpha(1-\delta))  &\leq \sr^\dag(\delta, \alpha),\\
		\dc(\delta) \leq \sr(\delta)&\leq \sr^\dag(\delta),  \textrm{and}\\
		\sr^*(\delta, \alpha)  &\leq \sr^\dag(\delta, \alpha).
	\end{align*}
	where in the second bound we assume $\delta < \alpha/2$.
\end{prop}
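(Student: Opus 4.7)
The plan is to prove each of the four bounds by short, essentially mechanical arguments that directly parallel the proof of Proposition \ref{prop-kamath-and-us}. In each case I take the distribution $R$ witnessing the stronger (right-hand) notion to be exactly the same $R$ that witnesses the weaker (left-hand) notion, and verify the defining inequality. The uniform object to track throughout is, for a fixed $h\in C$ and a distribution $D$ realizable by $h$,
\[
F_{h,D}(r,\omega) \;:=\; \Pr_{x,y\sim D}\bigl[\sign\langle\omega(h),r(x)\rangle\neq y\bigr]\in[0,1].
\]

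For inequalities (1) and (2) the key tool is Markov's inequality applied to $F_{h,D}$, mirroring the two Markov arguments already carried out for $\dc$ versus $\sr^*$ in Proposition \ref{prop-kamath-and-us}. For (1), from $\Ex_R[F_{h,D}]\leq\delta$ I deduce $\Pr_R[F_{h,D}>\alpha]\leq\delta/\alpha$ (the hypothesis $\delta<\alpha/2$ guarantees $\delta/\alpha<1/2$, keeping the resulting parameter in the admissible range). For (2), from $\Pr_R[F_{h,D}\geq\alpha]\leq\delta$ together with $F_{h,D}\leq 1$ I get $\Ex_R[F_{h,D}]\leq\delta+\alpha(1-\delta)$. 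The same $R$ works in both directions.

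For the chain (3), I handle the two inclusions separately. To see $\sr(\delta)\leq\sr^\dag(\delta)$, I specialize the $\sr^\dag$-defining condition to Dirac distributions $D=\delta_{(x,h(x))}$ for $x\in\sup(h)$; this turns the expectation over $(x,y)\sim D$ into a single point evaluation and recovers exactly the pointwise condition of Definition \ref{def-sr-kamath}. To see $\dc(\delta)\leq\sr(\delta)$, given $R$ witnessing $\sr(\delta)\leq d$ and a realizable $D$ with witness $h\in C$, I substitute $w=\omega(h)$ into the infimum in Definition \ref{def-dc-kamath}, use Fubini to swap $\Ex_{r,\omega\sim R}$ with $\Pr_{x,y\sim D}$, and invoke the pointwise $\sr$-bound at each $x\in\sup(h)$. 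Marginalizing $R$ to its $r$-coordinate then yields the required distribution for $\dc$.

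Inequality (4) follows the same route as $\dc\leq\sr$: given a realizable $D$ and a witnessing $h\in C$, the trivial bound $\inf_{w}\Pr_{x,y\sim D}[\sign\langle w,r(x)\rangle\neq y]\leq F_{h,D}(r,\omega)$ yields the event inclusion
\[
\bigl\{\inf_{w}\Pr_{x,y\sim D}[\sign\langle w,r(x)\rangle\neq y]>\alpha\bigr\}\;\subseteq\;\{F_{h,D}\geq\alpha\},
\]
so the $\sr^*$-bound is inherited from the $\sr^\dag$-bound after marginalizing to $r$. The only real care points are bookkeeping ones: choosing a supporting $h\in C$ for each realizable $D$ in the partial-concept-class setting, keeping strict versus non-strict inequalities consistent with the definitions, and checking that in (1) the resulting parameter $\delta/\alpha$ still satisfies $\delta/\alpha<1/2$ so that the $(\delta',\alpha)$-notation remains within its declared range. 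These are routine, and no nontrivial machinery beyond Markov's inequality and Fubini is needed.
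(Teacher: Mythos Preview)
Your proposal is correct and follows essentially the same approach as the paper: the first two inequalities are reduced to the Markov-type arguments of Proposition~\ref{prop-kamath-and-us}, the bound $\sr(\delta)\leq\sr^\dag(\delta)$ is obtained by specializing to one-point distributions, and $\sr^*(\delta,\alpha)\leq\sr^\dag(\delta,\alpha)$ by bounding the infimum by the particular choice $w=\omega(h)$ and marginalizing to the $r$-coordinate. The only difference is that for $\dc(\delta)\leq\sr(\delta)$ you spell out the Fubini argument explicitly, whereas the paper simply cites Proposition~24 of \cite{KamathMS20}; your direct argument is fine and arguably cleaner in this self-contained setting.
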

\begin{proof}
	Note that the relation between $\sr^\dag(\delta)$ and $\sr^\dag(\delta, \alpha)$ is the same as between $\dc(\delta)$ and $\sr^*(\delta, \alpha)$, so the proof of first two inequalities is similar to \Cref{prop-kamath-and-us}.
	The fact that $\dc(\delta) \leq \sr(\delta)$ is by Proposition~24 in~\cite{KamathMS20}. For $\sr(\delta) \leq \sr^\dag(\delta)$, note that for $R$ witnessing $\sr^\dag(\delta)\leq d$, it holds 
		\begin{align*} 
			\sup_{h\in C} &\sup_{x\in \sup(h)} \Ex_{r, \omega \sim R} \bigl[\sign\langle \omega(h), r(x)\rangle \neq h(x)\bigr] \\
			&= \sup_{h\in C} \sup_{\chi_{x,y}\ll h} \Ex_{r, \omega \sim R} \left[
					\Pr_{x, y\sim D} \bigl[\sign\langle w(h), r(x)\rangle \neq y\bigr] \right] \\
			&\leq \sup_{h\in C} \sup_{D \ll h}\Ex_{r, \omega \sim R} \left[
				\Pr_{x, y\sim D} \bigl[\sign\langle w(h), r(x)\rangle \neq y\bigr] \right] \leq \delta,
		\end{align*}
	where $\chi_{x,y}$ is a one-point distribution of the example $(x, y)$, and $D\ll h$ denotes that $D$ is realizable by $h$. Thus, $R$ also witnesses $\sr(\delta) \leq d$ and hence $\sr(\delta) \leq \sr^\dag(\delta)$.
	
Finally, for $R$ witnessing $\sr^\dag(\delta)\leq d$, the same $R$ restricted to the first coordinate trivially witnesses $\sr^*(\delta)\leq d$, and so $\sr^*(\delta, \alpha)  \leq \sr^\dag(\delta, \alpha)$.
\end{proof}

We are now ready to prove \Cref{t-prob-sr-margin}.
\begin{customthm}{\ref{t-prob-sr-margin}}[Probabilistic sign-rank of halfspaces with margin]
Let $C_{n}$ be the partial class of linear classifiers with constant margin $\gamma=1/3$ on the $n$-dimensional sphere $\mathbb{S}^n$. Then for $\alpha \in (0, 1/2)$ and $\delta\in (0,1)$, the $(\delta, \alpha)$-sign-rank$^*$ $d=\sr^*(\delta, \alpha)$ of $C_{n}$ is at most
    \[d= O\Bigl(\log\frac{1}{\alpha\delta}\Bigr).\]
The randomized $(\delta, \alpha)$-representation witnessing it is linear, that is, the respective distribution is over linear maps $\S^n\rightarrow \R^d$, where we treat $\S^n$ as a unit sphere in~$\R^{n+1}$.

Moreover, probabilistic $\delta$-dimension complexity $\dc(\delta)$ and probabilistic $\delta$-sign-rank $\sr(\delta)$ are at most $O(\log(1/\delta))$.
\end{customthm}
\begin{proof}
We will prove that for given $\alpha$ and $\delta$, $\sr^\dag(\delta\cdot\alpha) \leq O(\log 1/\alpha\delta)$. All the claimed bound then follow from the $\sr^*(\delta)\leq \sr^\dag(\delta)$ and $\dc(\delta) \leq \sr(\delta)\leq \sr^\dag(\delta)$ bounds from \Cref{prop-sr-dag}.

Let $d>0$, to be chosen later. Note that in this case the domain and the class are both equal to $\S^n$, and we need to construct a distribution $R$ over pairs of maps $(r\colon \S^n\rightarrow \R^d, \omega \colon \S^n\rightarrow \R^d)$. In fact, we put both maps to be equal to the same random linear map $\mathbb{R}^{n+1}\to \mathbb{R}^d$ whose entries $R_{i,j}$ are independent normal distributions $\NN(0,1)$. We refer to Corollary~20 in \cite{Ben-DavidES02} in the case of a single half-space for the following statement:
\emph{For any  half-space $w$ in $\R^{n+1}$ with margin $\gamma$, and $x\in \mathbb{S}^n$ in its support, so $|\inProd{w}{x}|>\gamma$, it holds}
\[
\Pr_{r\sim R}[\sign\big(\inProd{r(w)}{r(x)}\big)\neq \sign\big(\inProd{w}{x}\big)]\leq 4e^{\frac{-d\gamma^2}{8}}.
\]
We note that the above statement is a close relative of the Johnson-Lindenstrauss lemma, where this particular construction of the random projection is from~\cite{arriaga06}. In particular, as is usual for this lemma, this estimate does not depend on $n$. With this we get that for any distribution $D$ realizable by the half-space with margin $w$ we have


\[
\Ex_{r\sim R}\Big[\Pr_{(x,y)\sim D}\big[\sign\big(\inProd{r(w)}{r(x)}\big)\neq y\big] \Big]=\Pr_{(x,y)\sim D}\Big[\Pr_{r\sim R}\big[\sign\big(\inProd{r(w)}{r(x)}\big)\neq \sign\big(\inProd{w}{x}\big)\big] \Big]\leq 4e^{\frac{-d\gamma^2}{8}}.
\]

But, by recalling that $\gamma = 1/3$ is a fixed constant, the above will be less then $\delta\cdot\alpha$ for $d=\frac{10}{\gamma^2}\log\frac{1}{\alpha\delta}=O\left(\log(1/\alpha\delta)\right)$, as needed.
\end{proof}

We will now go for the proof of \Cref{t-det-sr-margin}, starting with the following lemma.
\begin{lem}\label{l:HellyNum}
    For $d\in \N$ and $0<\alpha<1/(d+1)$, an $\alpha$-representation of a (partial) concept class $C$ by half-spaces in $\mathbb{R}^d$ is an exact representation. In particular, if the $\alpha$-sign-rank of $C$ is at most $d = (1-\alpha)/\alpha$, then the sign-rank of $C$ is at most $d$.
\end{lem}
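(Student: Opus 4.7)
The plan is to combine a pigeonhole argument with Helly's theorem. First, fix any $c\in C$ and any $d+1$ points $x_1,\dots,x_{d+1}\in\supp(c)$. The uniform distribution on the labeled examples $\{(x_i,c(x_i))\}_{i=1}^{d+1}$ is realizable by $c$, hence by $C$, so by hypothesis its $r$-pushforward is $\alpha$-realizable by half-spaces in $\R^d$. On a uniform distribution over $d+1$ points the $0/1$ loss is an integer multiple of $1/(d+1)$, so the constraint $\alpha<1/(d+1)$ forces the infimum loss (attained on a finite support) to equal $0$, producing a half-space that exactly classifies all $d+1$ labeled points.

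Second, for each $x\in\supp(c)$ define the open convex subset $H_x=\{w\in\R^d : c(x)\langle w,r(x)\rangle>0\}$. The first step says that every $(d+1)$-subfamily of $\{H_x\}$ has nonempty intersection, so by Helly's theorem for convex subsets of $\R^d$ every \emph{finite} subfamily does. Any vector in such a finite intersection is a half-space exactly classifying the corresponding finite labeled subset of $\supp(c)$.

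Third, to conclude that $r$ is exact, fix any realizable $D$ and any $\epsilon>0$, pick a finite $F\subseteq\supp(D)$ with $D(F)\geq 1-\epsilon$, and take a half-space $w_F$ exactly classifying $F$ from the previous step. Then $L_{r(D)}(w_F)\leq 1-D(F)\leq\epsilon$, so $\OPT(r(D))=0$ with respect to half-spaces in $\R^d$; this is the defining condition for $r$ being a $0$-representation, i.e., exact. The in-particular consequence on sign-rank follows immediately: taking $r$ to be the witness of $\alpha$-sign-rank $d$ yields exact realizability of each $c$-realizable distribution, which is precisely a sign-rank $d$ representation of $C$.

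The main obstacle is the finite approximation in the last step when $D$ is non-atomic, since a finite $F$ with $D(F)\geq 1-\epsilon$ need not exist. In that regime the plan is to refine the argument by combining the finite-intersection property of the closed sets $\overline{H_x\cap\S^{d-1}}$ inside the compact sphere $\S^{d-1}$ with tightness of $D$, thereby extracting a single classifier that works on a compact core of $D$-mass $\geq 1-\epsilon$; for the finite or atomic setting driving the sign-rank corollary, the direct finite-$F$ approximation above already suffices.
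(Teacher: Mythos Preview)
Your first two steps---forcing zero loss on any $d+1$ labeled points by pigeonhole, then invoking Helly's theorem to obtain a consistent half-space for every finite realizable sample---are correct and coincide with the paper's argument.

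The genuine gap is in the passage from finite samples to arbitrary realizable distributions. Your primary argument (choose finite $F\subseteq\supp(D)$ with $D(F)\geq 1-\epsilon$) fails whenever $D$ is non-atomic, as you note; and the sign-rank corollary is \emph{not} confined to the finite or atomic setting, since the lemma is later applied (in the proof of Theorem~\ref{t-det-sr-margin}) to margin half-spaces on $\S^n$, whose realizable distributions are typically non-atomic. Your proposed patch is also problematic. Passing to the closures $\overline{H_x\cap\S^{d-1}}$ and using compactness of $\S^{d-1}$ yields only a $w^*$ with $c(x)\langle w^*, r(x)\rangle\geq 0$ for all $x$; nothing rules out $\langle w^*, r(x)\rangle = 0$ (a misclassification) on a set of positive $D$-measure. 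Meanwhile, tightness of the pushforward $r(D)$ in $\R^d$ produces compact but possibly uncountable cores, to which the finite Helly step does not apply. The two ingredients do not combine in any evident way to force $\OPT(r(D))=0$.

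The paper closes this gap by a different mechanism: uniform convergence. Half-spaces in $\R^d$ have VC dimension $d$, so for $S\sim D^n$ the finite Helly step gives (a.s.) a $w_S$ with $L_S(w_S)=0$, and then
\[
\Ex_{S} L_{r(D)}(w_S)\;\leq\;\Ex_{S}\sup_w\bigl|L_{r(D)}(w)-L_S(w)\bigr|\;=\;O\bigl(\sqrt{d/n}\bigr).
\]
Letting $n\to\infty$ gives $\OPT(r(D))=0$. In effect, the paper replaces your deterministic finite approximation of $D$ by i.i.d.\ sampling, which handles non-atomic $D$ automatically.
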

\begin{proof}
     Let $r: X\to \mathbb{R}^d$ be such $\alpha$-representation, and for any $D$ distribution over $X\times \{\pm 1\}$ let $L_D:\mathbb{R}^d\to [0,1]$ be the induced loss function, i.e. \[
     L_D(w)=\Pr_{(x,y)\sim D}\big(\sign(\inProd{w}{r(x)})\neq y\big)
     \]. For any $S\subseteq X\times\{\pm 1\}$, let $U_S$ denote the uniform measure on $S$, and
     define the set $V_S\subset \mathbb{R}^d$ by 
\[
V_S=\bigcap_{(x,y)\in S} \{w\in \mathbb{R}^d\;:\; \sign\big(\inProd{w}{r(x)}\big)=y\}.
\]
Note that $V_s$ is a convex set as an intersection of half-spaces. 

Now, let $S\subseteq X\times \{\pm 1\}$ be a finite sample realizable by $C$. As $r$ is an $\alpha$-representation, for any $T\subseteq S$ of size at most $d+1$ there is $w_T\in \mathbb{R}^d$ such that 
     \[
     \frac{1}{|T|}\bigl|\{(x,y)\in T\; :\sign\langle w_T,r(x) \rangle\neq y\}\bigr|=L_{U_T}({w_T})<\alpha<\frac{1}{|T|}.
     \]
Hence, $\sign\langle w_T,r(x)\rangle=y$ for all $(x,y)\in T$, and so $w_T\in V_T$. Thus, any intersection of at most $(d+1)$ sets $V_{\{(x,y)\}}$ for $(x,y)\in S$ is non-empty and, by Helly's theorem, the overall intersection $V_S$ is non-empty. Therefore, for each finite $S$ there exists $w_S\in \mathbb{R}^d$ such that $\sign\langle w_S,r(x)\rangle=y$ for all $(x,y)\in S$.

Now the class of half-spaces in $\mathbb{R}^d$ is a learnable class with VC-dimension of $d$, hence it satisfies the uniform convergence principle and by \Cref{t:UnifromConvergence} for any $\varepsilon>0$ and distribution $D$ realizable by $C$, we have that for all $n>0$

\[\Ex_{S\sim D^n} L_D(w_S)=
\Ex_{S\sim D^n}\Big(|L_{D}({w_S})-L_{S}({w_S})|\Big)\leq \Ex_{S\sim D^n}\Big(\sup_{w\in \mathbb{R}^d}|L_{D}(w)-L_{S}(w)|\Big)=O\bigl(\sqrt{d/n}\bigr).
\]
Hence, for any $n>0$ there is some $w\in \mathbb{R}^d$ such that $L_{D}(w)=O\bigl(\sqrt{d/n}\bigr)$, from which we deduce that $\OPT(D)=0$ and $r$ is an exact representation.
\end{proof}

\begin{customthm}{\ref{t-det-sr-margin}}[Deterministic sign-rank of halfspaces with margin]
Let $C_{n}$ be the partial class of linear classifiers with constant margin $\gamma=1/3$ on the $n$-dimensional sphere $\mathbb{S}^n$. Then for $\alpha \in (0, 1/2)$ the (deterministic) $\alpha$-sign-rank $d$ of $C_n$ is at least
    \[d \geq \min\Bigl\{\frac{1-\alpha}{\alpha},n+1\Bigr\}.\]
Moreover, if the respective $\alpha$-representation is continuous, then $d \geq n+1$.
\end{customthm}
\begin{proof}
Let $r:\S^{n}\to \R^d$ be an $\alpha$-representation of $C_n$ with $d<\frac{1-\alpha}{\alpha}$. By \Cref{l:HellyNum}, $r$ is an exact representation, so $d$ is at least the sign-rank of $C_n$. However, by Theorem~1.5 in~\cite{HatamiHM22}, the sign-rank of $C_n$ is exactly $n+1$.

Let us now assume that $\alpha\in (0, 1/2)$ and that the $\alpha$-representation $r:\S^{n}\to \R^d$ is continuous. Towards contradiction, suppose $d\leq n$. Then, by the Borsuk-Ulam theorem, there is some $x\in \S^{n}$ such that $r(x)=r(-x)$.  Let $D$ be the uniform distribution on $(x,1)$ and $(-x,-1)$, which is clearly realizable by $C$.
At the same time, trivially, for any $w\in \R^d$, $L_{r(D)}(w)=1/2$ and so $r(D)$ is not $\alpha$-realizable for $\alpha<1/2$.
\end{proof}

\bibliographystyle{plainnat}
\bibliography{bib}


\end{document}